\newcommand{\ProofLocation}{
\ifthenelse{\boolean{WithProof}}{
in Appendix
}{
\href{https://claroche-r.github.io/PnP_LADMM/proof}{here}}
}
\newcommand{\prox}{\ensuremath{\operatorname{prox}}}
\newcommand{\old}[1]{}
\newtheorem{theorem}{Theorem}
\newtheorem{lemma}[theorem]{Lemma}
\newtheorem{proposition}{Proposition}
\newtheorem{assumption}{Assumption}
\def\L{{\mathcal L}}
\def\LH{\|H\|^2} 
\def\denoiser{\mathcal{D}}
\def\sigmad{{\sigma_d}}
\title{Provably Convergent Plug \& Play Linearized ADMM, applied to Deblurring Spatially Varying Kernels}
\name{Charles Laroche$^{\star \dag}$ \qquad Andrés Almansa$^{\star}$ \qquad Eva Coupeté$^{\dag}$ \qquad Matias Tassano$^{\ddag}$}
\address{$^{\star}$ CNRS \& Université Paris Cité, MAP5 \qquad
$^{\dag}$ GoPro \qquad
$^{\ddag}$ Meta Inc}
\begin{document}
\maketitle
\footnotetext[1]{Work mostly done while Matias Tassano was at GoPro France.} \footnotetext[2]{We thank Pauline Tan, Arthur Leclaire, and Samuel Hurault for fruitful discussions. This work has been partially
funded by the French National Research and Technology
Agency (ANRT) and GoPro France.}
\begin{abstract}
Plug \& Play methods combine proximal algorithms with denoiser priors to solve inverse problems. These methods rely on the computability of the proximal operator of the data fidelity term. In this paper, we propose a Plug \& Play framework based on linearized ADMM that allows us to bypass the computation of intractable proximal operators. We demonstrate the convergence of the algorithm and provide results on restoration tasks such as super-resolution and deblurring with non-uniform blur.
\end{abstract}
\begin{keywords}
Plug \& Play, Image resoration, Deblurring, Optimization
\end{keywords}
\section{Introduction}
\label{sec:intro}
%
Many image restoration tasks can be formulated as inverse problems:
\begin{equation}\label{equ:inverse-problem}
    y = Hx + \varepsilon
\end{equation}
with $y \in \mathbb{R}^p$ the degraded image, $x \in \mathbb{R}^n$ the unknown clean image, $H \in \mathbb{R}^{p*n}$ the degradation matrix and $\varepsilon$ is the measurement noise.
Such tasks include denoising, deblurring, super-resolution, compressed sensing and so on. 
The reconstructed image $x$ can be obtained by maximizing the posterior $p(x|y) \propto p(y|x)p(x)$. Equivalently the posterior maximization or MAP estimator can be expressed as 
\begin{equation}
 \label{equ:map-optim}
     x_{MAP} = \arg\min_x\underbrace{h(Hx)+ \lambda f(x)}_{E(x)}
 \end{equation}
where $h(x) = -\log(p(y|x))$ is known as the data fitting term or negative log-likelihood and $\lambda f(x) = -\log(p(x))$ is the regularization term or negative log-prior. 
%
Classical approaches used convex regularization terms such as
Tikhonov~\cite[ch7]{milanfar2011SRbook},
Total~Variation~\cite{paramanand2013non} 
or wavelet-$\ell_1$~\cite{escande_sparse_2013} for example. More recently, \cite{venkatakrishnan2013} introduced \emph{Plug \& Play (PnP)} algorithms that enable the use of pretrained neural networks as implicit regularizers. PnP algorithms use a proximal splitting algorithm to solve the optimisation problem \eqref{equ:map-optim}, and then substitute the regularization subproblem by a pretrained denoiser.
The focus of this work is a variation of the alternating direction method of multipliers (ADMM) algorithm~\cite{prox_algo}, but the same idea has been extended to other splitting schemes including Primal Dual Splitting Splitting (PDS)~\cite{dpnp_dualprimal, Meinhardt2017}, fast iterative shrinkage~\cite{dpnp_fista}, and gradient descent~\cite{RED,Laumont2022pnpsgd}.

%
%
\old{Proximal-based }PnP algorithms like ADMM \old{or Primal-Dual} involve the computation at each iteration of the proximal operator of the data fitting term
\begin{align}
\label{equ:prox_g}
    \prox_{\alpha h(H \cdot)}(x) = \arg\min_z {\frac{1}{2\alpha}}\|x-z\|_2^2 + h(Hz).
\end{align}
This computation admits a fast closed form solution for many inverse problems like super-resolution~\cite{FastSR} or deconvolution~\cite{dpnp_theory_bounded}. For more complex tasks like deblurring with spatially-varying blur for example, the exact solution of \eqref{equ:prox_g} is computationally intractable, and even approximate solutions can be computationally expensive.
A common solution in such cases is to use ISTA \cite{Xu2020}, RED \cite{RED} or SGD \cite{Laumont2022pnpsgd} schemes where the more computationally friendly gradient $H^T \nabla h(H \cdot)$ is computed instead of the intractable proximal operator $\prox_{\alpha h(H \cdot)}$. Nevertheless this solution employing the gradient is not ideal because PnP-ADMM usually converges in far fewer iterations and is more robust to initial conditions than its gradient-based counterparts~\cite{Ahmad2020}. 

We propose in Section~\ref{sec:model} a linearized version of PnP-ADMM which preserves the benefits of PnP-ADMM while avoiding the costly proximal computation. This approach is close to \cite{Ono2017}, where a PnP-PDS algorithm is shown to have similar benefits. The convergence of PDS, though, has never been established in the non-convex or PnP case dealt with in this paper.
In contrast, the proposed method is shown to converge
to a critical point of $E(x)=h(Hx)+\lambda f(x)$ under less restrictive conditions than in previous works on PnP-ADMM, which require the denoiser residual to be Lipschitz continuous \cite{pmlr-v97-ryu19a,Hurault2022}, and impose constraints on the regularization parameter $\lambda$ \cite{pmlr-v97-ryu19a}. Such constraints on $\lambda$ mean that we need to choose between convergence guarantees and optimal regularization.
As for the denoiser constraints, several techniques exist to train a Lipschitz denoiser, but at the cost of degraded denoising performance~\cite{Hurault2022}.
The Linearized PnP-ADMM that we introduce in the next section does not require the denoiser to be Lipschitz continuous nor does it impose any constraints on $\lambda$.

\section{Model}
\label{sec:model}
In this section, we introduce our Plug \& Play linearized-ADMM algorithm (PnP LADMM). We first describe the main difference between ADMM and linearized-ADMM before discussing the convergence of linearized-ADMM in the case of Plug \& Play.

\subsection{Linearized-ADMM (LADMM)}

In order to solve MAP estimation problems like \eqref{equ:map-optim} ADMM starts from the augmented Lagrangian
\begin{align}
\label{equ:lagrangian}
    \mathcal{L}_\beta(x,z,w) = & h(z) + \lambda f(x) + \langle w, Hx-z\rangle \nonumber \\
    & + \frac{\beta}{2}\|Hx-z\|^2.
\end{align}
Note that in our case we used the splitting variable $Hx=z$, instead of the more common choice $x=z$ \cite{pmlr-v97-ryu19a, Hurault2022}, which leads to the potentially expensive computation of $\prox_{\alpha h(H \cdot)}$.
ADMM is based on a alternate minimization on the three variables of the Lagrangian \eqref{equ:lagrangian}, namely
\begin{align}
    & x_{k+1} = \arg\min_x \mathcal{L}_\beta(x,z_k,w_k) \\
    & z_{k+1} = \arg\min_z \mathcal{L}_\beta(x_{k+1},z,w_k) \\
    & w_{k+1} = w_k + \beta (Hx_{k+1}-z_{k+1}).
\end{align}
Now the $z$-update only requires the simpler computation of $\prox_{\alpha h}$, but the $x$-update is intractable because it involves both $f$ and $H$. The main idea of linearized-ADMM is to replace the minimization of the Lagrangian in the $x$-update by the minimization of an approximate or "linearized" Lagrangian where the quadratic term $\frac{\beta}{2}\|z-Hx\|^2$ is replaced by an isotropic majorizer with curvature $L_x \geq \beta \LH$:
\begin{align}
\label{equ:approx_lagr}
    \Tilde{\mathcal{L}}^k_{\beta}(x,z,w) = & h(z) + \lambda f(x) + \langle w, Hx-z\rangle + \frac{L_x}{2}\|x-x_k\|_2^2 \nonumber \\
    & + \frac{\beta}{2}\langle x-x_k, 2 H^T(Hx_k-z) \rangle.
\end{align}
Using this notation, we can express linearized-ADMM as:
\begin{align}
    \label{equ:ladmm_x}
    & x_{k+1} =  \arg\min_x{\Tilde{\mathcal{L}}^k_{\beta}(x,z_k,w_k)} \\
    \label{equ:ladmm_z}
    & z_{k+1} = \arg\min_z{\mathcal{L}_\beta(x_{k+1},z,w_k)}  \\
    \label{equ:ladmm_w}
    & w_{k+1} = w_k + \beta(Hx_{k+1} - z_{k+1}).
\end{align}
\subsection{Convergence}
Despite the approximation we can show that LADMM converges to the expected critical point under mild assumptions.
\begin{assumption}
\label{assump:1}

\begin{itemize}
    \item $h(z) + \lambda f(x)$ is lower bounded on the set $\{(z,x) \in (\mathbb{R}^{n*p})^2| z = Hx\}$.
    \item $h$ is strongly convex and $L_h$-Lipschitz differentiable 
\end{itemize}
\end{assumption}
\begin{theorem}
\label{theorm1}
 Under Assumption~\ref{assump:1}, for linearized-ADMM with hyper parameters such that:
 \begin{align}
 \label{eq:convergence-condition1}
     & \beta \geq L_h \\
\label{eq:convergence-condition2} 
    & L_x \geq \beta \LH
 \end{align}
 then the sequence $\{\mathcal{L}_\beta(x_k,z_k,w_k)\}$ is convergent and the primal residues $\|x_{k+1}-x_k\|$, $\|z_{k+1}-z_k\|$ and the dual residue $\|w_{k+1}-w_k\|$ converge to 0 as k approaches infinity. \newline
We also have that the sequence ${(x_k,z_k,w_k)}$ satisfies 
\begin{equation}
\lim_{k\to\infty} \nabla_w \mathcal{L}_\beta(x_k,z_k,w_k)=
\lim_{k\to\infty} \nabla_z \mathcal{L}_\beta(x_k,z_k,w_k)=0
\end{equation}
and that there exists
\begin{equation}
    d^k \in \partial_x \mathcal{L}_\beta(x_k,z_k,w_k) \quad \text{s.t} \quad \lim_{k\to\infty} d^k = 0.
\end{equation}
If in addition $f$ is differentiable then 
$\lim_{k\to\infty} \nabla E(x_k)=0$.\footnotemark[3]
\end{theorem}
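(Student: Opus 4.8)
The plan is to use $\mathcal{L}_\beta$ itself as a Lyapunov function, establish a sufficient-decrease inequality across one full iteration, deduce convergence of $\{\mathcal{L}_\beta(x_k,z_k,w_k)\}$ together with summability of the residuals, and finally read off the stationarity limits. The observation that drives everything is that the $z$-update optimality condition $\nabla h(z_{k+1}) = w_k + \beta(Hx_{k+1}-z_{k+1})$, combined with the dual update \eqref{equ:ladmm_w}, yields the identity $w_{k+1} = \nabla h(z_{k+1})$, hence $w_k = \nabla h(z_k)$ for every $k$. This is the key structural fact: the dual iterate is slaved to the gradient of $h$ at $z_k$, so the dual residue is controlled by the primal $z$-residue through $\|w_{k+1}-w_k\| = \|\nabla h(z_{k+1})-\nabla h(z_k)\| \le L_h\|z_{k+1}-z_k\|$.

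First I would quantify the decrease of each sub-step. For the $x$-step, since $L_x \ge \beta\LH$ the map $x\mapsto \frac{L_x}{2}\|x-x_k\|^2 + \beta\langle x-x_k, H^T(Hx_k-z_k)\rangle$ is an isotropic majorizer (up to an additive constant) of the quadratic $\frac{\beta}{2}\|Hx-z_k\|^2$ tangent at $x_k$; expanding $\Tilde{\mathcal{L}}^k_\beta(x_{k+1},z_k,w_k)\le \Tilde{\mathcal{L}}^k_\beta(x_k,z_k,w_k)$ and cancelling the common constant gives $\mathcal{L}_\beta(x_{k+1},z_k,w_k)\le \mathcal{L}_\beta(x_k,z_k,w_k) - \frac{L_x-\beta\LH}{2}\|x_{k+1}-x_k\|^2$, requiring no convexity of $f$. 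For the $z$-step, a direct computation using $w_k=\nabla h(z_k)$ gives the exact identity $\mathcal{L}_\beta(x_{k+1},z_{k+1},w_k) = \mathcal{L}_\beta(x_{k+1},z_k,w_k) - D_h(z_k,z_{k+1}) - \frac{\beta}{2}\|z_{k+1}-z_k\|^2$, where $D_h(a,b)=h(a)-h(b)-\langle\nabla h(b),a-b\rangle$ is the Bregman divergence of $h$. For the $w$-step, $\mathcal{L}_\beta$ increases by exactly $\frac{1}{\beta}\|w_{k+1}-w_k\|^2$.

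The crux is to show this dual increase is absorbed by the $z$-decrease precisely when $\beta\ge L_h$. Here I would invoke the co-coercivity bound $D_h(z_k,z_{k+1})\ge \frac{1}{2L_h}\|\nabla h(z_k)-\nabla h(z_{k+1})\|^2$, valid for $L_h$-smooth convex $h$, together with $\|\nabla h(z_{k+1})-\nabla h(z_k)\|^2\le L_h^2\|z_{k+1}-z_k\|^2$; a short computation then shows $\frac{1}{\beta}\|w_{k+1}-w_k\|^2 \le D_h(z_k,z_{k+1}) + \frac{\beta}{2}\|z_{k+1}-z_k\|^2 - \frac{\beta^2+L_h\beta-2L_h^2}{2\beta}\|z_{k+1}-z_k\|^2$, and the coefficient $\beta^2+L_h\beta-2L_h^2$ is nonnegative exactly for $\beta\ge L_h$. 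Summing the three estimates yields $\mathcal{L}_\beta(x_{k+1},z_{k+1},w_{k+1}) \le \mathcal{L}_\beta(x_k,z_k,w_k) - \frac{L_x-\beta\LH}{2}\|x_{k+1}-x_k\|^2 - \frac{\beta^2+L_h\beta-2L_h^2}{2\beta}\|z_{k+1}-z_k\|^2$, so $\mathcal{L}_\beta$ is non-increasing. I would then bound it below: using $L_h$-smoothness, $\mathcal{L}_\beta(x_k,z_k,w_k)\ge E(x_k) + \frac{\beta-L_h}{2}\|Hx_k-z_k\|^2 \ge \inf_{z=Hx}\{h(z)+\lambda f(x)\}$, which is finite by Assumption~\ref{assump:1}. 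A monotone bounded-below sequence converges, and telescoping the sufficient-decrease inequality makes the two residual series summable; when the hyperparameter conditions hold strictly this forces $\|x_{k+1}-x_k\|\to0$ and $\|z_{k+1}-z_k\|\to0$, and then $\|w_{k+1}-w_k\|\le L_h\|z_{k+1}-z_k\|\to0$.

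Finally, the stationarity limits follow by evaluating the (sub)gradients of $\mathcal{L}_\beta$ at the diagonal point and rewriting the off-diagonal optimality conditions in terms of the residuals. Since $\nabla_w\mathcal{L}_\beta(x_k,z_k,w_k)=Hx_k-z_k=\frac{1}{\beta}(w_k-w_{k-1})$ and, using $w_k=\nabla h(z_k)$, $\nabla_z\mathcal{L}_\beta(x_k,z_k,w_k)=-\beta(Hx_k-z_k)$, both tend to $0$ with the dual residue. For the $x$-subgradient I would take the element of $\lambda\partial f(x_k)$ furnished by the $x$-update optimality and add the remaining terms of $\partial_x\mathcal{L}_\beta$, obtaining $d^k = H^T(w_k-w_{k-1}) - L_x(x_k-x_{k-1}) + \beta H^TH(x_k-x_{k-1}) - \beta H^T(z_k-z_{k-1})$, a combination of residuals that vanishes. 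When $f$ is differentiable, substituting $\lambda\nabla f(x_k)$ from the same optimality condition into $\nabla E(x_k)=H^T\nabla h(Hx_k)+\lambda\nabla f(x_k)$ and writing $\nabla h(Hx_k)-w_{k-1}=[\nabla h(Hx_k)-\nabla h(z_k)]+(w_k-w_{k-1})$ shows $\nabla E(x_k)\to0$, since $\|\nabla h(Hx_k)-\nabla h(z_k)\|\le L_h\|Hx_k-z_k\|\to0$. The main obstacle is the dual-control step: bounding the inherently non-monotone dual increase by the primal decrease is the classical difficulty in nonconvex ADMM analyses, and it is overcome here only because the splitting $Hx=z$ forces $w_k=\nabla h(z_k)$, so that strong convexity and $L_h$-smoothness of $h$ pin the threshold at exactly $\beta\ge L_h$; the borderline equality cases, where the descent coefficients degenerate to zero, are the delicate point requiring either strict inequalities or a supplementary argument.
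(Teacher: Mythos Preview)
Your argument follows the same Lyapunov scheme as the paper: split one full sweep into $x$-, $z$-, and $w$-updates, bound the change of $\mathcal{L}_\beta$ in each, show $\mathcal{L}_\beta$ is non-increasing and bounded below, telescope, and then read off stationarity. The structural identity $w_k=\nabla h(z_k)$, the $x$-step majorization bound, the $w$-step identity $\tfrac{1}{\beta}\|w_{k+1}-w_k\|^2$, the lower bound via $L_h$-smoothness, and the subgradient construction for $d^k$ all coincide with the paper's Lemmas~\ref{lemma:lagr_x}--\ref{lemma:lipsch} and the proof of Theorem~\ref{theorm1}.

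The one genuine difference is your treatment of the $z$-step. The paper's Lemma~\ref{lemma:lagr_z} uses strong convexity of $h$ with modulus $m$ to obtain a decrease $m\|z_{k+1}-z_k\|^2$, so that after subtracting the dual increase one is left with the coefficient $m-L_h^2/\beta$; strictly speaking this is nonnegative only for $\beta\ge L_h^2/m$, which equals $L_h$ only in the quadratic case $m=L_h$ that is the paper's intended application. You instead derive the exact Bregman identity $\mathcal{L}_\beta(x_{k+1},z_k,w_k)-\mathcal{L}_\beta(x_{k+1},z_{k+1},w_k)=D_h(z_k,z_{k+1})+\tfrac{\beta}{2}\|z_{k+1}-z_k\|^2$ (note: this comes from the $z_{k+1}$-optimality condition, not from $w_k=\nabla h(z_k)$ as you wrote) and then invoke co-coercivity $D_h\ge\tfrac{1}{2L_h}\|\nabla h(z_k)-\nabla h(z_{k+1})\|^2$, which needs only convexity and $L_h$-smoothness. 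This is a cleaner route that lands exactly on the threshold $\beta\ge L_h$ without reference to $m$. One small caveat: your displayed coefficient $\tfrac{\beta^2+L_h\beta-2L_h^2}{2\beta}$ is justified by your argument only in the range $L_h\le\beta\le 2L_h$; for $\beta>2L_h$ the sign of $\tfrac{1}{2L_h}-\tfrac{1}{\beta}$ flips and the combination of co-coercivity with $r\le L_h^2 s$ no longer yields that specific constant, though it still yields the positive coefficient $\tfrac{\beta}{2}$, so the conclusion is unaffected. Your closing remark that the borderline equalities $\beta=L_h$ or $L_x=\beta\|H\|^2$ degenerate the descent coefficients is correct and applies equally to the paper's proof.
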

\footnotetext[3]{A proof of Theorem~\ref{theorm1} (adapted from~\cite{Liu2019}) and of Proposition~\ref{prop:pnp-denoisers} (based on \cite{Gribonval2011} and \cite{Hurault2022} is provided \ProofLocation}

\subsection{Plug \& Play linearized-ADMM (PnP-LADMM)}

Using the change of variable $u_k = \frac{w_k}{\beta}$ and re-aranging the terms in the optimization steps from equation (\ref{equ:ladmm_x}-\ref{equ:ladmm_w}), we can obtain the proximal version of linearized ADMM:
\begin{align}
\label{equ:ladmm_prox_x}
    & x_{k+1} = \prox_{\frac{\lambda}{\L_x}f}(x_k - \frac{\beta}{L_x}H^T(Hx_k - z_k + u_k)) \\
    & z_{k+1} = \prox_{\frac{1}{\beta}h}(H x_{k+1} + u_k)\\
    & u_{k+1} = u_k + (Hx_{k+1} - z_{k+1}).
\end{align}
The proximal operator in (\ref{equ:ladmm_prox_x}) can be seen as a denoising problem with regularization function $f$ and noise level $\sigma_d^2 = \frac{\lambda}{L_x}$. In the spirit of Plug \& Play approaches, this proximal operator can be replaced by an off-the-shelf denoiser $\mathcal{D}_{\sigma_d}$ (see Algorithm~\ref{algo}). In comparison to PnP-ADMM \cite{pmlr-v97-ryu19a,Hurault2022} which requires the denoiser residual $\mathcal{D}_{\sigma_d}-Id$ to be non-expansive to ensure convergence, the proposed PnP-LADMM converges for a larger family of denoisers:
\begin{proposition}\label{prop:pnp-denoisers}
If $\mathcal{D}_{\sigma_d}$ is any MMSE denoiser or the Proximal Gradient Step denoiser in \cite{Hurault2022}, then there exists a lower bounded function $f$ such that $\mathcal{D}_{\sigma_d} = \prox_{\sigma_d^2f}$.\footnotemark[3]
\end{proposition}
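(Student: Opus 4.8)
The plan is to treat both denoiser families through a single principle, the characterization of proximity operators due to Gribonval and Nikolova \cite{Gribonval2011}: a single-valued map $T$ on $\mathbb{R}^n$ coincides with $\prox_{\phi}$ for some proper, lower semicontinuous penalty $\phi$ as soon as $T=\nabla\psi$ for a convex function $\psi$, and moreover $\phi$ can be reconstructed explicitly from $\psi$. So for each case I would first exhibit a convex potential $\psi$ with $\mathcal{D}_{\sigma_d}=\nabla\psi$, then set $\sigma_d^2 f := \psi^\ast-\tfrac12\|\cdot\|^2$ (with $\psi^\ast$ the Legendre conjugate), and finally verify both the proximal identity and the lower bound on $f$.

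For the MMSE denoiser I would write $Y=X+\sigma_d\varepsilon$ with $\varepsilon\sim\mathcal{N}(0,I)$ and let $p_Y=p_X\ast\mathcal{N}(0,\sigma_d^2 I)$ be the marginal of the observation. Tweedie's formula gives
\begin{equation}
  \mathcal{D}_{\sigma_d}(y)=\mathbb{E}[X\mid Y=y]=y+\sigma_d^2\,\nabla\log p_Y(y),
\end{equation}
so $\mathcal{D}_{\sigma_d}=\nabla\psi$ with $\psi(y)=\tfrac12\|y\|^2+\sigma_d^2\log p_Y(y)$. The Jacobian of $\mathcal{D}_{\sigma_d}$ equals $\sigma_d^{-2}\,\mathrm{Cov}(X\mid Y=y)$, which is symmetric and positive semidefinite; since $\mathbb{R}^n$ is simply connected this both confirms that $\mathcal{D}_{\sigma_d}$ is a gradient field and shows that $\psi$ is convex. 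With $\sigma_d^2 f=\psi^\ast-\tfrac12\|\cdot\|^2$ the proximal objective collapses, after expanding the square, to $\tfrac12\|y\|^2+\bigl(\psi^\ast(z)-\langle y,z\rangle\bigr)$, whose minimizer is exactly $z=\nabla\psi(y)=\mathcal{D}_{\sigma_d}(y)$. The minimal value equals $\tfrac12\|y\|^2-\psi(y)=-\sigma_d^2\log p_Y(y)$, i.e. the Moreau envelope of $\sigma_d^2 f$; since a Gaussian convolution is bounded above by the Gaussian peak, $p_Y(y)\le(2\pi\sigma_d^2)^{-n/2}$, this envelope is bounded below, and because any penalty dominates its Moreau envelope pointwise, $f$ inherits the same lower bound.

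For the Proximal Gradient Step denoiser of \cite{Hurault2022} the potential is available by construction: $\mathcal{D}_{\sigma_d}=\mathrm{Id}-\nabla g_{\sigma_d}$ with $\nabla g_{\sigma_d}$ being $L$-Lipschitz for some $L<1$, so $\mathcal{D}_{\sigma_d}$ is the gradient of the convex function $\psi=\tfrac12\|\cdot\|^2-g_{\sigma_d}$ and is injective. I would then invoke the explicit inverse to define $f$ exactly as in \cite{Hurault2022}, where the resulting penalty is shown to be lower bounded, and the same conjugacy computation as above certifies $\mathcal{D}_{\sigma_d}=\prox_{\sigma_d^2 f}$.

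The hard part, in both cases, is not producing the candidate $f$ but certifying that $\mathcal{D}_{\sigma_d}(y)$ is the \emph{global} argmin of the possibly nonconvex proximal problem, and that $f$ is bounded below. Both hinge on the convexity of $\psi$: it upgrades the stationarity condition $y-\mathcal{D}_{\sigma_d}(y)\in\partial(\sigma_d^2 f)(\mathcal{D}_{\sigma_d}(y))$ to global optimality via $\psi^{\ast\ast}=\psi$, and it identifies the optimal value with $-\sigma_d^2\log p_Y$, whose boundedness follows from the Gaussian-convolution bound. Care is additionally needed where $\psi$ fails to be differentiable or $\mathcal{D}_{\sigma_d}$ fails to be injective, which is precisely where the technical hypotheses of \cite{Gribonval2011,Hurault2022} must be imposed.
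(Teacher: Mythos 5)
Your proposal is correct and lands on the same two pillars as the paper — \cite{Gribonval2011} for the MMSE case and \cite{Hurault2022} for the gradient-step case — but the MMSE half follows a genuinely different route. The paper invokes the Gribonval result as a black box only for unit noise level (there exists $f \geq -\log p_Y$ with $\mathcal{D}_1 = \prox_f$, lower bounded because $p_Y = p_X * g_1$ is dominated by the Gaussian peak) and then handles general $\sigma_d$ by the rescaling $\mathcal{P}(x) = \sigma_d^{-1}\mathcal{D}_{\sigma_d}(\sigma_d x)$, which is itself a unit-variance MMSE denoiser, yielding $\phi_{\sigma_d}(x) = \sigma_d^2 f(x/\sigma_d)$. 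You instead re-derive the Gribonval construction directly at arbitrary $\sigma_d$: Tweedie's formula exhibits $\mathcal{D}_{\sigma_d} = \nabla\psi$ with $\psi = \tfrac12\|\cdot\|^2 + \sigma_d^2 \log p_Y$, convexity of $\psi$ follows from the positive semidefinite conditional covariance, the explicit penalty $\sigma_d^2 f = \psi^\ast - \tfrac12\|\cdot\|^2$ is verified to give the global prox via $\psi^{\ast\ast} = \psi$, and the lower bound on $f$ falls out of the Moreau-envelope identity $\min_z\bigl(\tfrac12\|y-z\|^2 + \sigma_d^2 f(z)\bigr) = -\sigma_d^2\log p_Y(y)$ together with the same Gaussian-peak bound. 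Your route is more self-contained and makes transparent exactly which hypotheses (smoothness and positivity of $p_Y$, differentiability of $\psi$) are doing the work, at the cost of reproving a cited theorem; the paper's scaling trick is shorter and keeps the citation clean. For the gradient-step denoiser the two arguments are essentially identical — both defer to \cite{Hurault2022} for existence of the potential and for the bound $\phi_{\sigma_d} \geq g_{\sigma_d} \geq 0$ — and your added observation that $\tfrac12\|\cdot\|^2 - g_{\sigma_d}$ is convex when $\nabla g_{\sigma_d}$ is $L$-Lipschitz with $L<1$ is exactly the hypothesis under which that citation applies. I see no gap in either half.
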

As a consequence Theorem~\ref{theorm1} ensures convergence of PnP-LADMM for the gradient step denoiser or any MMSE denoiser. So we can use any state-of-the-art denoising architecture trained with quadratic loss for $\mathcal{D}_\sigma$. In practice, we adopt the widely used DRUNet denoiser that was introduced in~\cite{zhang2021plug}.
\begin{algorithm}[btph]
\caption{PnP Linearized ADMM algorithm
\\
{\small
Solves $x = \arg\min_x h(Hx)+\lambda f(x)$
}
}\label{alg:pnp-sr}\label{algo}
\begin{algorithmic}
\Require $x_0,\, z_0,\, u_0,\, \beta,\, L_x,\, \mathcal{D}_{\sigma_d} = \prox_{\sigma_d^2 f}$
\For{$k \in [0, N-1]$}
    \State $x_{k+1} = \mathcal{D}_{\sqrt{\frac{\lambda}{L_x}}}(x_k-\frac{\beta}{L_x}H^T(Hx_k - z_k + u_k))$
    \State $z_{k+1} = \frac{y + \sigma^2 \beta (Hx_{k+1}+u_k)}{1 + \beta \sigma^2}$
    \State $u_{k+1} = u_k + \beta(z_{k+1} - Hx_{k+1})$
\EndFor
\end{algorithmic}
\end{algorithm}
The computational efficiency of the method relies both on the use of the splitting variable $Hx=z$ and the linearization. Using the splitting variable $Hx=z$ leads to a $z$-update that is very easy to compute since it corresponds to the proximal operator of a quadratic norm which does not involve the degradation operator $H$. On the other hand, the linearization leads to an $x$-update that bypasses the inversion of the degradation operator $H$. Our PnP-LADMM algorithm only requires that we can efficiently compute the quantities $Hx$ and $H^Tx$ at each iteration. The forward and adjoint of the degradation operator can be efficiently computed for a wide diversity of tasks such as super-resolution, spatially-varying blur, inpainting, compressed sensing, etc. The whole iterative process with the closed form formulation is summarized in Algorithm~\ref{alg:pnp-sr}.
\newline
\newline
PnP-LADMM has 4 different hyperparameters, $\lambda, \sigma_d, \beta$ and $L_x$. The parameters $\lambda$ and $\sigma_d$ are model parameters of the MAP estimator, they will be responsible for the quality of the output and control the balance between data fidelity and regularization. On the other hand, $\beta$ and $L_x$ are parameters of the optimization algorithm, they control the convergence speed. Since these 4 parameters are linked to each other via the constraint $\sigma_d = \sqrt{\lambda/ L_x}$, there are only 3 degrees of freedom for our algorithm. For a Gaussian data fitting term, we have $h(x) = \frac{1}{2\sigma^2}\|x-y\|_2^2$ so $L_h = 1/\sigma^2$. The condition of Theorem 1 implies that:
\begin{align}
    & L_x \geq \beta\|H\|^2 \geq L_h\|H\|^2 \\
    & \Leftrightarrow \lambda \geq \sigma_d^2 \beta\|H\|^2 \geq \frac{\sigma_d^2}{\sigma^2}\|H\|^2
\end{align}
This means that we can choose any non-negative regularization parameter $\lambda>0$ as long as we decrease $\sigma_d$ accordingly for very small values of $\lambda$.

\section{Experiments}
\label{sec:experiments}

In this section, we evaluate the performance of our approach on deblurring images with spatially-varying blur. All the code used in our experiments can be found in the \href{https://github.com/claroche-r/PnP_LADMM}{GitHub page} of the project.

\subsection{Datasets}
\begin{figure}[t]
\begin{subfigure}[t]{0.40\linewidth}
    \centering
    \includegraphics[width=\linewidth]{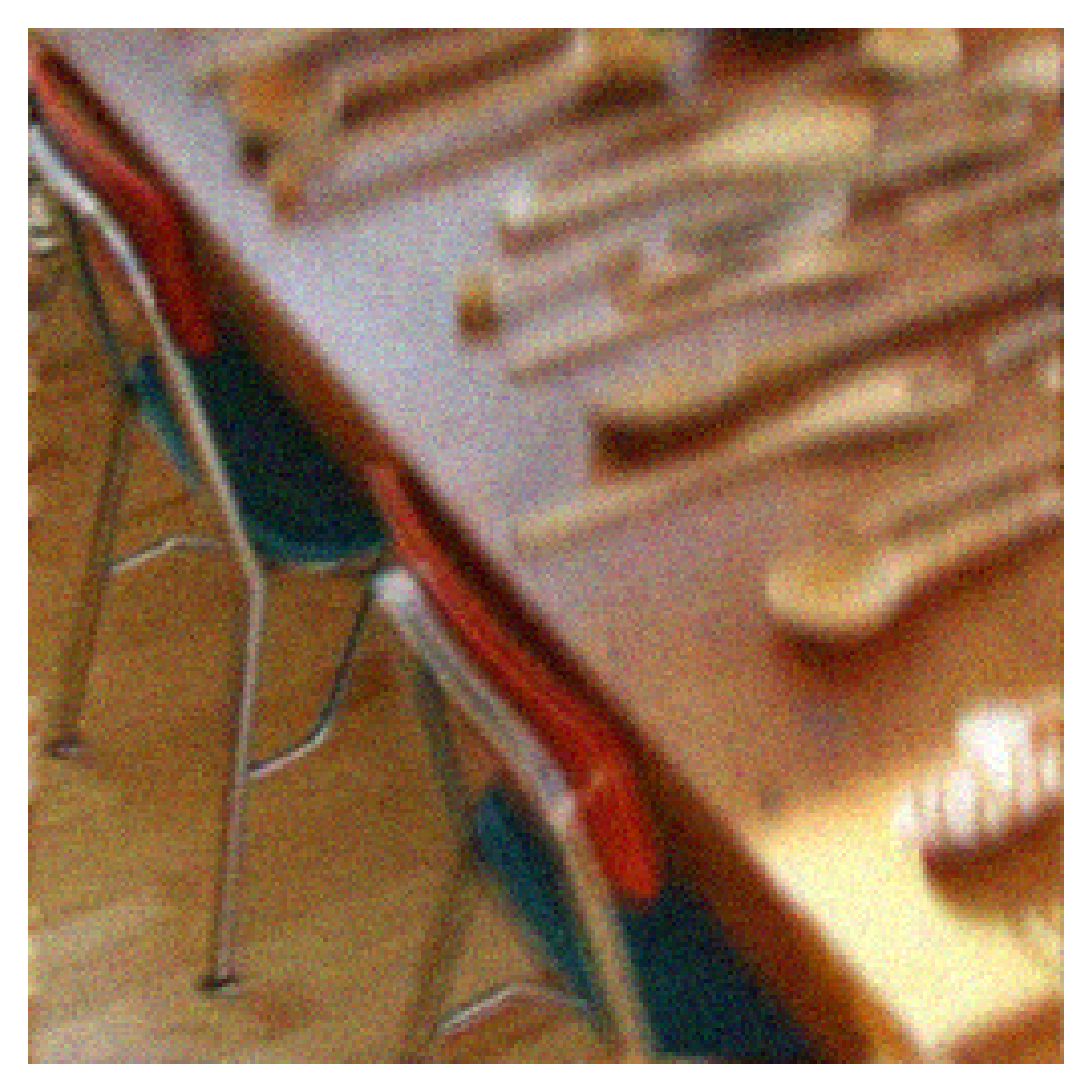}
    \caption{LR}
    \label{fig:one}
\end{subfigure}
\hspace*{\fill}
\begin{subfigure}[t]{0.40\linewidth}
    \centering
    \includegraphics[width=\linewidth]{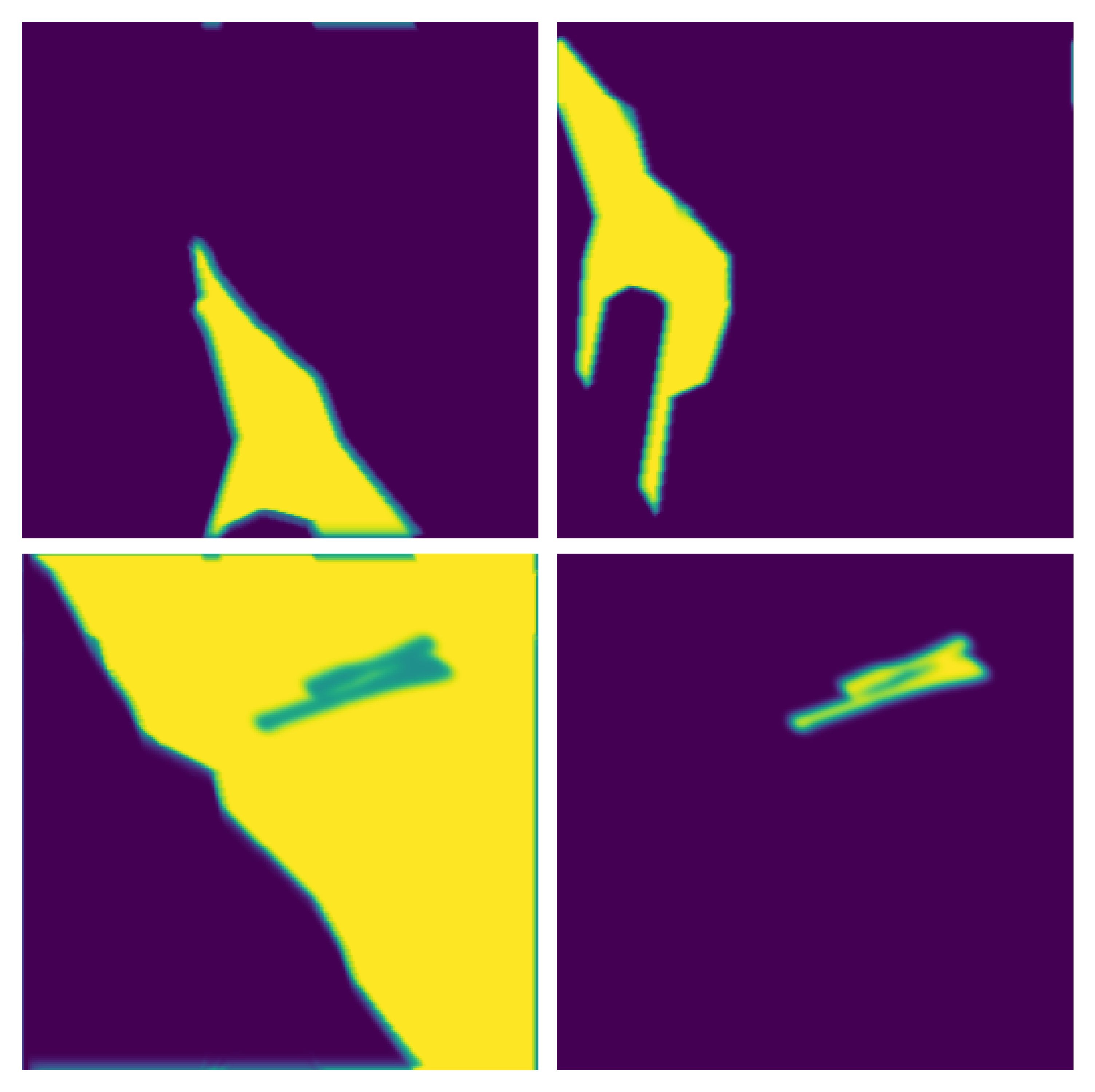}
    \caption{$U_i$'s}
    \label{fig:second}
\end{subfigure}
\vfill
\begin{subfigure}{\linewidth}
    \centering
    \includegraphics[width=0.40\linewidth]{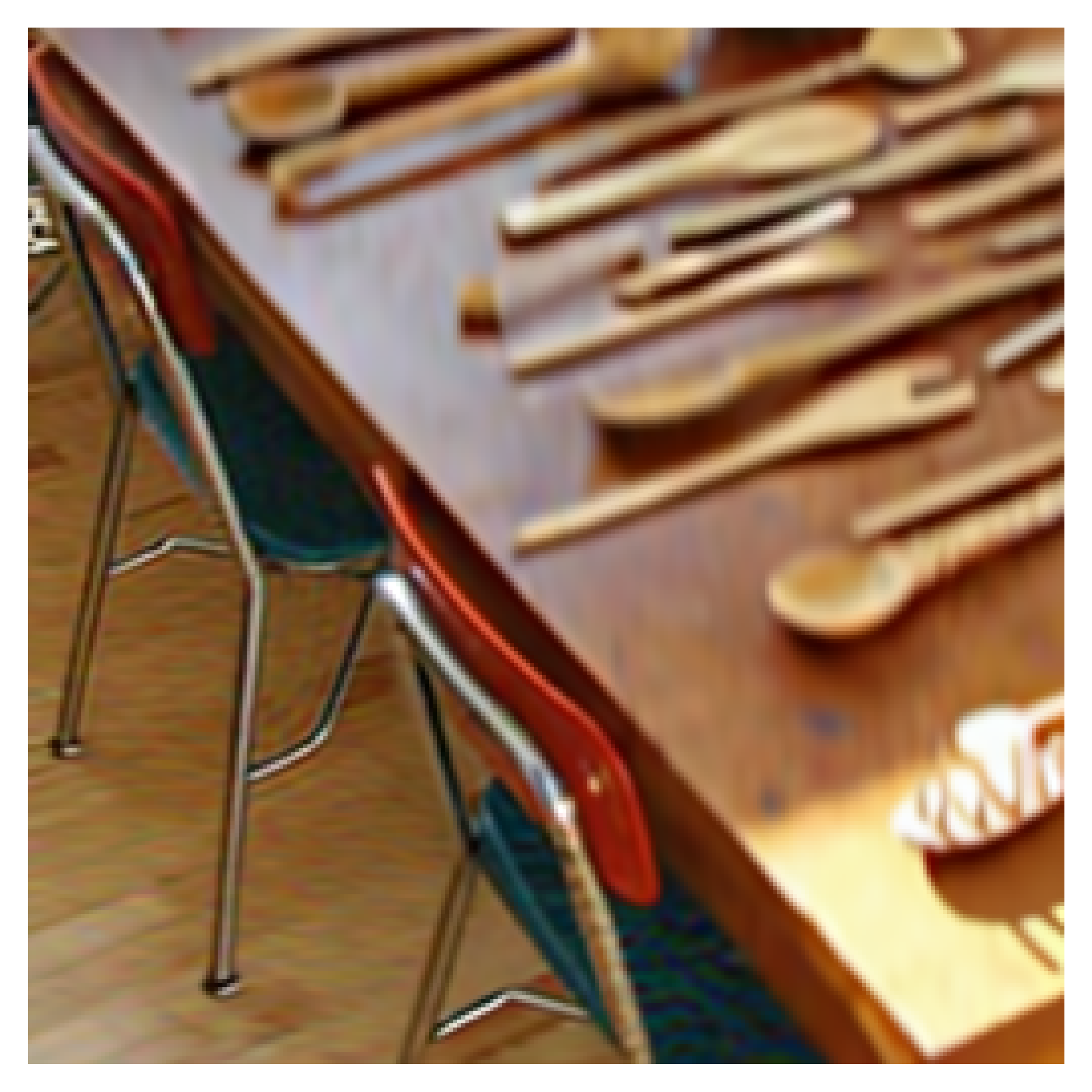}
    \caption{Deblurred}
    \label{fig:third}
\end{subfigure}
\caption{Example of sample from the testset with restoration result.}
\label{fig:samples-o-leary}
\end{figure}
\begin{table}[t]
    \centering
    \resizebox{\columnwidth}{!}{%
    \begin{tabular}{|c|c|c|c|}
        \hline
        Model & Runtime  &  $\sigma$ &   Metrics  \\
        \hline
        & &  &   (PSNR$\uparrow$ , SSIM$\uparrow$ , LPIPS$\downarrow$ )  \\
        \hline
        \hline
        \multirow{4}{*}{Richardson-Lucy}&  \multirow{4}{*}{10sec} & 1 &  (23.4, 0.74, 0.27) \\
         & & 10 & (20.9, 0.43, 0.55)	\\
         & & 20 & (18.8, 0.25, 0.64)	 \\
         & & 40 & (15.4, 0.13, 0.72)	\\
        \hline
        \multirow{4}{*}{PnP-ISTA} & \multirow{4}{*}{247sec} & 1 &  (23.4, 0.71, 0.34) \\
         & & 10 & (23.3, 0.71, 0.33) \\
         & & 20 & (22.7, \textbf{0.67}, 0.38)  \\
         & & 40 & (21.7, \textbf{0.61}, 0.43) \\
        \hline
        \multirow{4}{*}{PnP-ADMM + CG} & \multirow{4}{*}{286sec} & 1 &  (\textbf{25.8, 0.82}, 0.26)	 \\
         & & 10 & \textbf{(23.7, 0.72, 0.32)} \\
         & & 20 & \textbf{(22.9, 0.67, 0.37)} \\
         & & 40 & (\textbf{21.7}, 0.60, \textbf{0.43}) \\
        \hline
        \multirow{4}{*}{PnP-LADMM} &   \multirow{4}{*}{124sec} & 1  &  (25.6, 0.81, \textbf{0.22}) \\ 
         & & 10 & (\textbf{23.7}, \textbf{0.72, 0.32}) \\
         & & 20 & (22.8, 0.66, 0.38) \\
         & & 40 & (\textbf{21.7}, \textbf{0.61}, \textbf{0.43}) \\
        \hline
    \end{tabular}%
    }
    \caption{Performance of the different models, PnP-ADMM + CG refers to PnP-ADMM where the proximal operator of the data term is computed using conjugate gradient algorithm. Best results are in \textbf{bold}.}
    \label{tab:quantitative-res}
\end{table}
We test our approach on deblurring images with non-uniform blur. Non-blind deblurring algorithms usually suppose the blur to be uniform since it leads to easier computations both for the generation of synthetic data and for the deblurring. However, the uniform blur assumptions does not hold for many real-world applications, such as motion blur or defocus blur. To highlight the performance of our algorithm on deblurring spatially-varying blur, we degrade our images using the dataset introduced in~\cite{DMBSR}  which uses the O'Leary~\cite{oleary} model. In particular, we suppose that the blur $H$ in the inverse problem~\eqref{equ:inverse-problem}
is decomposed as a linear combination
\begin{equation}
     H = \sum\limits_{i=1}^P {U_i K_i},  \ \varepsilon \sim \mathcal{N}(0, \sigma^2)
 \end{equation}
of uniform blur (convolution) operators $K_i$ with spatially varying mixing coefficients, \emph{i.e.} diagonal matrices $U_i$ such that $\sum\limits_{i=1}^P{U_i} = Id$, $U_i \geq 0$. Please note that even with this decomposition, the proximal operator $\prox_{h(H.)}$ from Equation \eqref{equ:prox_g} cannot be easily and efficiently computed. The advantage of the O'Leary model is that its forward and transpose operators can be computed very efficiently using convolution and masking operations. Also, this formulation can model a large diversity of spatially varying blurs. In our experiments, we apply this degradation process on the COCO dataset~\cite{coco}, using the segmentation masks as the $U_i$'s and we build random Gaussian and motion blur kernels for the $K_i$'s. Figure~\ref{fig:samples-o-leary} shows the low-resolution obtained.

\subsection{Compared methods}

We compare our approach to the Richardson-Lucy algorithm~\cite{richardson1972, lucy1974}, Plug \& Play ADMM with splitting variable $x=z$ \cite{pmlr-v97-ryu19a} where the proximal operator is approximated using conjugate gradient algorithm (PnP-ADMM + CG) and Plug \& Play ISTA~\cite{gavaskar2020} (PnP-ISTA). These algorithms are common for deblurring with spatially varying kernels (see \cite{debarnot2022} for PnP-ADMM+CG and \cite{carbajal2021nonuniform} for Richardson-Lucy) and the convergence of PnP-ADMM and PnP-ISTA has been proven.
For a fair comparison, we use the same DRUNet denoiser pretrained on Gaussian noise with $\ell^2$ loss for all the methods.
This choice ensures that the denoiser is a good approximation of an MMSE estimator, which is sufficient to guarantee convergence for our PnP-LADMM algorithm, and for PnP-ISTA as well \cite{Xu2020}. In order to ensure convergence of PnP-ADMM we could have used a modified denoiser with non-expansive residual, but this constraint degrades its performance and it is most often not necessary to ensure convergence in practice.
We observed that all methods do not behave the same for similar denoiser noise level $\sigma_d$ and regularization parameter $\lambda$. Following this observation, we decided to separately tune these parameters for each method and dataset noise level to find their respective optima in terms of restoration quality on a small training dataset. Results are then evaluated on a different test dataset of 40 images with the optimal parameters found on the training set. 
\begin{figure}[t]
    \centering
    \includegraphics[width=0.9\linewidth]{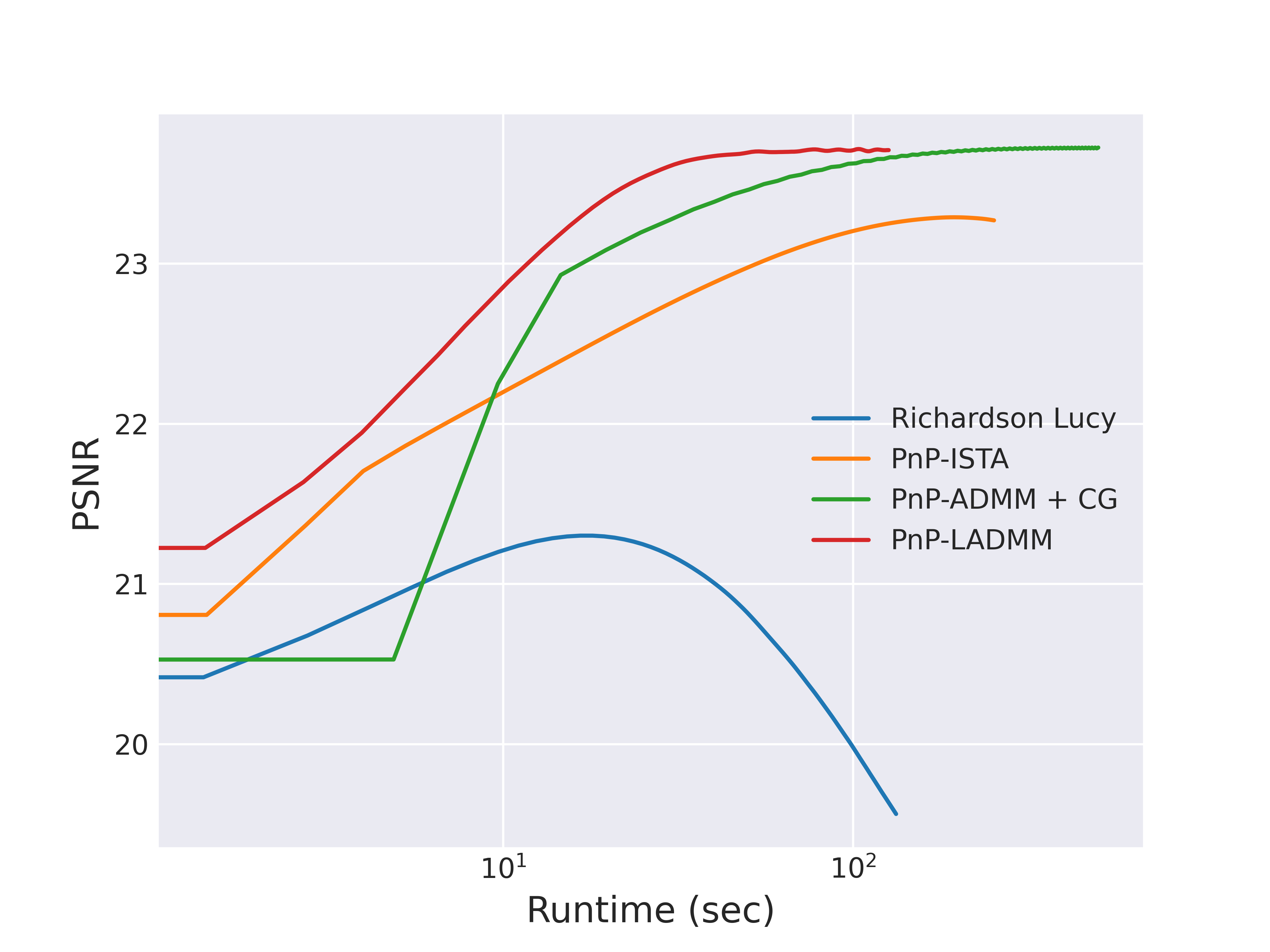}
    \caption{Convergence speed of the different methods, we use 40 images with spatially-varying blur and Gaussian noise with $\sigma=10/255$.}
    \label{fig:plot_speed}
\end{figure}

\subsection{Results}
        
We evaluate the performance of the models using classical metrics such as PSNR, SSIM and LPIPS. The overall performance results are summarized in Table~\ref{tab:quantitative-res}.
In terms of performances, the baseline Richardson-Lucy fails to deblur the noisy images. PnP-LADMM and PnP-ADMM + CG have similar performances with a maximum difference of 0.3dB in favor of the latter. PnP-ISTA has slightly lower performances than the two ADMM based algorithms. It seems that the margin is decreasing when the noise level increases.
In terms of runtime, similar observation are found in Table~\ref{tab:quantitative-res} and Figure~\ref{fig:plot_speed}. Richardson-Lucy quickly reaches its optimum since it does not involve expensive operations. PnP-ISTA is twice slower than PnP-LADMM. As discussed in the introduction, gradient-based methods are often slow to converge. In particular here, PnP-ISTA requires around 200 iterations to converge in comparison to 100 for PnP-LADMM. Since one PnP-ISTA iteration takes as much time as one PnP-LADMM iteration (we apply one denoiser forward, one $H^Tx$ and one $Hx$ operation), it results in an algorithm that is slower. PnP-ADMM + CG is the slowest algorithm despite needing only 40 iterations to converge. This slowness is due to the fact that the conjugate gradient is computed at each step in order to approximate the proximal operator. Finally, PnP-LADMM is the fastest PnP algorithm. In fact, even though the linearization is causing the algorithm to converge in more iterations, the benefit of bypassing the computation of the proximal operator is greater resulting in a faster algorithm.
These results highlight the fact that for complex degradation operators, PnP-LADMM achieves the best performance/ratio trade-off. 

\section{Conclusion}
\label{sec:conclusion}
We presented a novel Plug \& Play approach based on LADMM to solve inverse problems with complex degradation operators such as non-uniform blur. The linearized version of ADMM allows to bypass the computation of intractable proximal operators. We demonstrate the efficiency of our method on the problem of deblurring images with O'Leary spatially-varying blur. We found that PnP-LADMM obtains the best performance/runtime trade-off compared to the gradient-based method PnP-ISTA or PnP-ADMM combined with conjugate gradient to compute the proximal operator. In addition, the proposed algorithm provides convergence guarantees under less restrictive conditions than previous PnP-ADMM results.

\bibliographystyle{IEEEbib}
\bibliography{refs}

\ifthenelse{\boolean{WithProof}}{\newpage\onecolumn
\appendix
\section{Convergence of linearized-ADMM}

In this section, we prove Theorem~\ref{theorm1} and Proposition~\ref{prop:pnp-denoisers}.
The proof of Theorem~\ref{theorm1} is an adaptation of a similar result in~\cite{Liu2019}. The main difference is that in~\cite{Liu2019} the ADMM is linearized in both proximal descent steps, whereas in our case we are interested in linearizing only one of them.

In the sequel, we suppose (without loss of generality) that $\lambda=1$ in our MAP estimator defined in~(\ref{equ:map-optim}).

%
%
\begin{lemma}
\label{lemma:lagr_x}
    Under Assumption 1, the following inequality holds for the x-update:
    $$\mathcal{L}_\beta(x_k,z_k,w_k) - \mathcal{L}_\beta(x_{k+1},z_k,w_k) \geq \frac{L_x - \beta \LH}{2}\|x_k - x_{k+1}\|^2 $$
    with $\LH$ the largest singular value of $H^T H$.
\end{lemma}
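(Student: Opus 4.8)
The plan is to exploit the fact that the linearized Lagrangian is built from an isotropic majorizer of the quadratic penalty $q(x) := \frac{\beta}{2}\|Hx - z_k\|^2$ at the current iterate $x_k$, so that the descent produced by minimizing the surrogate transfers to the true Lagrangian. Write $\mathcal{L}_\beta(x,z_k,w_k) = G(x) + q(x)$ where $G(x) := h(z_k) + f(x) + \langle w_k, Hx - z_k\rangle$ collects all the terms that are shared with (and untouched by) the linearization, and correspondingly $\Tilde{\mathcal{L}}^k_\beta(x,z_k,w_k) = G(x) + \Tilde{q}(x)$ with $\Tilde{q}(x) := \langle \nabla q(x_k), x - x_k\rangle + \frac{L_x}{2}\|x - x_k\|^2$ and $\nabla q(x_k) = \beta H^T(Hx_k - z_k)$.

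First I would use the defining property of the $x$-update, namely $x_{k+1} = \arg\min_x \Tilde{\mathcal{L}}^k_\beta(x, z_k, w_k)$, which yields $G(x_{k+1}) + \Tilde{q}(x_{k+1}) \le G(x_k) + \Tilde{q}(x_k) = G(x_k)$, the last equality because $\Tilde{q}(x_k)=0$. This is the only place where the update rule enters, and notably it requires nothing about $f$ beyond $x_{k+1}$ being a minimizer; in particular $f$ need not be convex, which is essential for the Plug \& Play application.

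Next I would rewrite the target quantity using the \emph{exact} second-order expansion of the quadratic $q$ around $x_k$, namely $q(x_{k+1}) = q(x_k) + \langle \nabla q(x_k), x_{k+1} - x_k\rangle + \frac{\beta}{2}\|H(x_{k+1} - x_k)\|^2$ (exact since $q$ has constant Hessian $\beta H^T H$). Substituting the previous inequality into the decomposition $\mathcal{L}_\beta(x_k,z_k,w_k) - \mathcal{L}_\beta(x_{k+1},z_k,w_k) = [G(x_k) - G(x_{k+1})] + [q(x_k) - q(x_{k+1})]$ and bounding $G(x_k) - G(x_{k+1}) \ge \Tilde{q}(x_{k+1})$, the linear-in-$(x_{k+1}-x_k)$ contributions coming from $\Tilde{q}(x_{k+1})$ and from the expansion of $q$ cancel exactly, leaving
\[
\mathcal{L}_\beta(x_k,z_k,w_k) - \mathcal{L}_\beta(x_{k+1},z_k,w_k) \ge \frac{L_x}{2}\|x_{k+1} - x_k\|^2 - \frac{\beta}{2}\|H(x_{k+1} - x_k)\|^2.
\]

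Finally I would apply the spectral bound $\|H(x_{k+1}-x_k)\|^2 \le \LH\,\|x_{k+1}-x_k\|^2$, with $\LH$ the largest eigenvalue of $H^T H$, to reach the claimed constant $\frac{L_x - \beta\LH}{2}$. I do not expect a serious obstacle: the argument is essentially a majorization--minimization descent estimate, and the only things to watch are the bookkeeping — keeping the constant and linear terms of $q$ and $\Tilde{q}$ aligned so the cross terms cancel — and noting that the curvature condition $L_x \ge \beta\LH$ from \eqref{eq:convergence-condition2} is precisely what guarantees the right-hand side is nonnegative, i.e. that the $x$-step genuinely decreases $\mathcal{L}_\beta$.
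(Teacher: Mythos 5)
Your proposal is correct and follows essentially the same route as the paper's proof: both use the minimizer property of the $x$-update, expand the quadratic penalty exactly so the linear terms cancel against the linearized surrogate, and finish with the spectral bound $\|H(x_{k+1}-x_k)\|^2 \le \LH \|x_{k+1}-x_k\|^2$. The only difference is presentational — you package the shared terms into $G$ and the quadratic into $q$, $\Tilde q$, whereas the paper manipulates the inner products directly — but the substance is identical.
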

\begin{proof}
    Using the notation of equation~\eqref{equ:approx_lagr}, we define $\overline{f}^k (x) = \Tilde{\mathcal{L}}^k_{\beta}(x,z_k,w_k)$ and by definition of the $x$-update, we have:
    \begin{align}
        & \overline{f}^k(x_k) \geq \overline{f}^k(x_{k+1}) \\
        \Leftrightarrow & \langle x_k - x_{k+1} , H^T w_k + \beta H^T(Hx_k - z_k)\rangle \nonumber \\
         \label{equ:fk}
         & + f(x_k) - f(x_{k+1}) \geq   \frac{L_x}{2}\|x_{k+1}-x_k\|^2
    \end{align}
    We also have that:
    \begin{align}
        & \mathcal{L}_\beta(x_k,z_k,w_k) - \mathcal{L}_\beta(x_{k+1},z_k,w_k)  \nonumber \\
        & =f(x_k) - f(x_{k+1}) + \langle w_k, H(x_k-x_{k+1}) \rangle \nonumber \\
        & \ \  + \frac{\beta}{2}\|Hx_k-z_k\|^2 - \frac{\beta}{2}\|Hx_{k+1} - z_k\|^2  \\
        &=  f(x_k) - f(x_{k+1}) - \frac{\beta}{2}\|H(x_{k+1} -x_k) \|^2 \nonumber \\
        & \ \ + \langle x_k - x_{k+1}, H^T w_k + \beta H^T(Hx_k - z_k) \rangle \\
        & \geq  \frac{L_x}{2}\|x_{k+1}-x_k\|^2 - \frac{\beta}{2}\|H(x_{k+1} -x_k) \|^2  \label{equ:lemma_3_1}  \\
        & \geq \frac{L_x-\beta \LH}{2}\|x_{k+1} -x_k\|^2
    \end{align}
    where the inequality \eqref{equ:lemma_3_1} is obtained using \eqref{equ:fk}.
\end{proof}
%
%
\begin{lemma}
\label{lemma:lagr_z}
    $\mathcal{L}_\beta(x_{k+1},z_k,w_k) - \mathcal{L}_\beta(x_{k+1},z_{k+1},w_k) \geq m\|z_k - z_{k+1}\|^2$
\end{lemma}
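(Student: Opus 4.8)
The plan is to exploit the fact that, in contrast with the $x$-update treated in Lemma~\ref{lemma:lagr_x}, the $z$-update minimizes the \emph{exact} augmented Lagrangian rather than a linearized surrogate, so that $z_{k+1}$ is a genuine minimizer of $z\mapsto\mathcal{L}_\beta(x_{k+1},z,w_k)$, and then to turn this optimality into a quantitative decrease via strong convexity. First I would freeze $x=x_{k+1}$ and $w=w_k$ and isolate the $z$-dependence by writing
\begin{equation}
g(z) = \mathcal{L}_\beta(x_{k+1},z,w_k) = h(z) - \langle w_k, z\rangle + \frac{\beta}{2}\|Hx_{k+1}-z\|^2 + c_k,
\end{equation}
where $c_k = f(x_{k+1}) + \langle w_k, Hx_{k+1}\rangle$ collects all terms independent of $z$. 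Since $c_k$ cancels in a difference of values of $g$, the left-hand side of the lemma equals exactly $g(z_k)-g(z_{k+1})$.

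Next I would establish strong convexity of $g$. By Assumption~\ref{assump:1}, $h$ is strongly convex, say with modulus $\mu_h>0$, and being $L_h$-Lipschitz differentiable it is in particular differentiable; the linear term $-\langle w_k,z\rangle$ leaves the curvature unchanged, while the penalty $\frac{\beta}{2}\|Hx_{k+1}-z\|^2$ has Hessian $\beta I$. Hence $g$ is differentiable and strongly convex with modulus $\mu_h+\beta$. (Note that the penalty alone already forces a positive modulus $\beta>0$, so the strong convexity of $h$ is not strictly needed for the conclusion here.) Because $z_{k+1}$ minimizes $g$, it satisfies $\nabla g(z_{k+1})=0$.

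Finally I would apply the standard inequality for a differentiable $c$-strongly convex function at a stationary point, namely $g(z)-g(z_{k+1})\ge \tfrac{c}{2}\|z-z_{k+1}\|^2$, with $z=z_k$ and $c=\mu_h+\beta$, yielding
\begin{equation}
\mathcal{L}_\beta(x_{k+1},z_k,w_k) - \mathcal{L}_\beta(x_{k+1},z_{k+1},w_k) = g(z_k)-g(z_{k+1}) \ge \frac{\mu_h+\beta}{2}\|z_k-z_{k+1}\|^2,
\end{equation}
so the claim holds with $m=(\mu_h+\beta)/2>0$. I do not expect any genuine obstacle in this lemma; the only points requiring care are the bookkeeping that identifies the constant $m$ with half the strong-convexity modulus of $g$, and the observation that the \emph{exactness} of the $z$-minimization (as opposed to the majorized $x$-step) is precisely what lets us invoke the minimizer optimality condition $\nabla g(z_{k+1})=0$ directly.
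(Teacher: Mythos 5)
Your proof is correct and follows essentially the same route as the paper: both arguments combine the strong convexity of $z\mapsto\mathcal{L}_\beta(x_{k+1},z,w_k)$ with the exact first-order optimality condition $\nabla_z\mathcal{L}_\beta(x_{k+1},z_{k+1},w_k)=0$ from the $z$-update. Your version is in fact slightly more explicit, since you identify the constant as $m=(\mu_h+\beta)/2$ and note that the quadratic penalty alone already supplies a positive modulus, whereas the paper leaves $m$ as an unspecified strong-convexity parameter.
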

\begin{proof}
    From Assumption 1, we have that $\mathcal{L}_\beta$ is strongly convex in $z$ with parameter $m$. The strong convexity of $\mathcal{L}_\beta$ implies that:
    \begin{align}
        & \mathcal{L}_\beta(x,z_k,w) - \mathcal{L}_\beta(x,z_{k+1},w) \\
        & \ \ \ \ \geq \nabla_z \mathcal{L}_\beta(x,z_{k+1},w)(z_k - z_{k+1}) + m \|z_k - z_{k+1}\|^2 
    \end{align}
    However, the $z$-update of Algorithm \ref{algo} is such that
    \begin{equation}
        \nabla_z \mathcal{L}_\beta(x_{k+1},z_{k+1},w_k) = 0
    \end{equation}
    which leads to the results.
\end{proof}
\begin{lemma}
\label{lemma:w_grad_h}
    Under Assumption~\ref{assump:1}, the following equality holds:
    \begin{equation}
        w_k = \nabla_z h(z_k)
    \end{equation}
\end{lemma}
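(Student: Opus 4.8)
The plan is to read the identity off directly from the first-order optimality condition of the $z$-update together with the dual update, with no estimation needed. First I would observe that by Assumption~\ref{assump:1} the function $h$ is differentiable, so $z \mapsto \mathcal{L}_\beta(x_{k+1},z,w_k)$ is differentiable, and its strong convexity (inherited from $h$) guarantees a unique minimizer in the $z$-update \eqref{equ:ladmm_z}, characterized by the stationarity condition
\[
\nabla_z \mathcal{L}_\beta(x_{k+1},z_{k+1},w_k) = 0.
\]
This is precisely the condition already invoked in the proof of Lemma~\ref{lemma:lagr_z}, so I may take it for granted.

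Next I would compute this gradient explicitly from the definition \eqref{equ:lagrangian}. Differentiating the $z$-dependent terms $h(z) + \langle w_k, Hx_{k+1}-z\rangle + \frac{\beta}{2}\|Hx_{k+1}-z\|^2$ and setting the result to zero yields
\[
\nabla h(z_{k+1}) - w_k - \beta\,(Hx_{k+1}-z_{k+1}) = 0 .
\]
The key step is then to recognize the dual residual inside this expression: by the $w$-update \eqref{equ:ladmm_w} we have $\beta\,(Hx_{k+1}-z_{k+1}) = w_{k+1}-w_k$. Substituting this in collapses the two occurrences of $w_k$ and gives
\[
\nabla h(z_{k+1}) = w_{k+1},
\]
which is exactly the claimed identity with the index shifted by one. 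Hence $w_k = \nabla h(z_k)$ holds for every $k\ge 1$.

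There is no substantive obstacle here beyond bookkeeping. The identity is manufactured by the update rules themselves, so it genuinely holds only from the first completed iteration onward; the one point requiring a word of care is therefore the base case $k=0$, which holds provided the algorithm is initialized with $w_0 = \nabla h(z_0)$ (otherwise the statement should be read for $k\ge 1$, which is all that the subsequent arguments use). Everything else is a single substitution, so I expect the write-up to be only a few lines.
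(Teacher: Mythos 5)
Your proof is correct and follows essentially the same route as the paper: write out the stationarity condition $\nabla_z \mathcal{L}_\beta(x_{k+1},z_{k+1},w_k)=0$ explicitly and substitute the dual update $w_{k+1}=w_k+\beta(Hx_{k+1}-z_{k+1})$ to obtain $\nabla h(z_{k+1})=w_{k+1}$. Your added remark that the identity is generated by the updates and hence holds for $k\ge 1$ (or for $k=0$ only under a compatible initialization $w_0=\nabla h(z_0)$) is a point of care the paper glosses over, but it does not change the argument since only indices $k\ge 1$ are used downstream.
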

\begin{proof}
    From the definition of the Lagrangian:
    $$\nabla_z \mathcal{L}_\beta(x,z,w) = \nabla h(z) - w - \beta (Hx-z)$$
    Using the fact that 
    \begin{equation}
       w_{k+1}= w_k + \beta (Hx_{k+1}-z_{k+1}) \quad \text{and} \quad \nabla_z \mathcal{L}_\beta(x_{k+1},z_{k+1},w_k) =0
    \end{equation}
    We have:
    \begin{align}
        & 0 =  \nabla h(z_{k+1}) - w_k - \beta (Hx_{k+1}-z_{k+1}) \\
        \Leftrightarrow &  \nabla h(z_{k+1}) = w_{k+1}
    \end{align}
\end{proof}
\begin{lemma}
\label{lemma:lagr_u}
Under assumption 1, 
\begin{align}
   & \mathcal{L}_\beta(x_{k+1},z_{k+1},w_{k+1}) - \mathcal{L}_\beta(x_{k+1},z_{k+1},w_k)  \\
   & =  \frac{1}{\beta}\|w_{k+1} - w_k\|^2  \leq  C_1\|z_{k+1}-z_k\|^2
\end{align}
with $C_1 = L_h^2/\beta$.
\end{lemma}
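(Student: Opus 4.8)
The plan is to treat the equality and the inequality as two separate, essentially independent computations, since they draw on different ingredients established earlier.

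For the equality, I would exploit the fact that $\mathcal{L}_\beta(x,z,w)$ is affine in its third argument: inspecting the definition in~\eqref{equ:lagrangian}, the only $w$-dependent term is the inner product $\langle w, Hx-z\rangle$. Consequently, holding $x=x_{k+1}$ and $z=z_{k+1}$ fixed and varying only $w$ from $w_k$ to $w_{k+1}$, the difference of Lagrangians collapses to
\begin{equation}
\mathcal{L}_\beta(x_{k+1},z_{k+1},w_{k+1}) - \mathcal{L}_\beta(x_{k+1},z_{k+1},w_k) = \langle w_{k+1}-w_k,\, Hx_{k+1}-z_{k+1}\rangle.
\end{equation}
I would then substitute the dual update \eqref{equ:ladmm_w}, rewritten as $Hx_{k+1}-z_{k+1} = \frac{1}{\beta}(w_{k+1}-w_k)$, into the inner product to obtain exactly $\frac{1}{\beta}\|w_{k+1}-w_k\|^2$.

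For the inequality, the key is Lemma~\ref{lemma:w_grad_h}, which gives $w_k = \nabla h(z_k)$ for every iterate. Applying it at both indices yields $w_{k+1}-w_k = \nabla h(z_{k+1}) - \nabla h(z_k)$. Since $h$ is $L_h$-Lipschitz differentiable by Assumption~\ref{assump:1}, this difference is bounded by $L_h\|z_{k+1}-z_k\|$. Squaring and dividing by $\beta$ then produces
\begin{equation}
\frac{1}{\beta}\|w_{k+1}-w_k\|^2 \leq \frac{L_h^2}{\beta}\|z_{k+1}-z_k\|^2 = C_1\|z_{k+1}-z_k\|^2,
\end{equation}
with $C_1 = L_h^2/\beta$ as claimed.

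I do not expect a genuine obstacle here: both halves are short once Lemma~\ref{lemma:w_grad_h} is available, and this lemma is precisely where the strong convexity/Lipschitz structure of $h$ does the work of turning the dual residual into a primal residual on $z$. The only point requiring a little care is bookkeeping, namely recognizing that the equality uses solely the dual update \eqref{equ:ladmm_w}, whereas the inequality additionally invokes the gradient identity for $w_k$. This lemma matters for the overall argument because it lets the dual residue $\|w_{k+1}-w_k\|$ be controlled by the primal residue $\|z_{k+1}-z_k\|$, which in the convergence proof will be dominated by the Lagrangian decrease established in Lemmas~\ref{lemma:lagr_x} and~\ref{lemma:lagr_z}.
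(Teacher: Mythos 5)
Your proposal is correct and follows exactly the same route as the paper's proof: the affine dependence of $\mathcal{L}_\beta$ on $w$ combined with the dual update \eqref{equ:ladmm_w} gives the equality, and Lemma~\ref{lemma:w_grad_h} together with the $L_h$-Lipschitz continuity of $\nabla h$ from Assumption~\ref{assump:1} gives the inequality. No gaps; the bookkeeping and the division of labor between the two halves match the paper precisely.
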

\begin{proof}
    By definition of the augmented Lagrangian we have that:
    \begin{align*}
        \mathcal{L}_\beta(x_{k+1},z_{k+1},w_{k+1}) & - \mathcal{L}_\beta(x_{k+1},z_{k+1},w_k) \\
        & = \langle w_{k+1} - w_k, H x_{k+1}  - z_{k+1}\rangle \\
        & = \frac{1}{\beta}\|w_{k+1}-w_k\|^2 \\
        & = \frac{1}{\beta}\|\nabla_z h(z_{k+1}) - \nabla_z h(z_{k})\|^2 \ \ \text{ from Lemma~\ref{lemma:w_grad_h}} \\
        & \leq \frac{L_h^2}{\beta}\|z_{k+1}-z_k\|^2 \ \ \text{\hfill from \ Assumption~\ref{assump:1}}
    \end{align*}
\end{proof}
\begin{lemma}
\label{lemma:lipsch}
    If $g$ is  $L_g$-Lipschitz differentiable then:
    \begin{equation}
        g(y_2) - g(y_1) \geq \nabla g(s)(y_2 - y_1) - \frac{L_g}{2}\|y_2-y_1\|^2 
    \end{equation}
    where s denotes $y_1$ or $y_2$
\end{lemma}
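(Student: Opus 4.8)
The plan is to recognize this as the classical descent-type inequality that follows directly from the Lipschitz continuity of $\nabla g$, and to treat the two admissible choices $s=y_1$ and $s=y_2$ in a unified way via the fundamental theorem of calculus along the segment joining the two points.

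First I would parametrize that segment by $\gamma(t)=y_1+t(y_2-y_1)$ for $t\in[0,1]$ and write
\[
g(y_2)-g(y_1)=\int_0^1\langle\nabla g(\gamma(t)),\,y_2-y_1\rangle\,dt.
\]
Then, for the case $s=y_1$, I would subtract $\langle\nabla g(y_1),\,y_2-y_1\rangle$ from both sides and recast the right-hand side as
\[
\int_0^1\langle\nabla g(\gamma(t))-\nabla g(y_1),\,y_2-y_1\rangle\,dt,
\]
after which Cauchy--Schwarz together with the Lipschitz estimate $\|\nabla g(\gamma(t))-\nabla g(y_1)\|\leq L_g\,t\,\|y_2-y_1\|$ bounds this integral from below by $-\tfrac{L_g}{2}\|y_2-y_1\|^2$, since $\int_0^1 t\,dt=\tfrac12$. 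This is exactly the claimed inequality with $s=y_1$.

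For $s=y_2$ I would run the same computation but subtract $\langle\nabla g(y_2),\,y_2-y_1\rangle$ instead; the intermediate-point estimate now reads $\|\nabla g(\gamma(t))-\nabla g(y_2)\|\leq L_g\,(1-t)\,\|y_2-y_1\|$, and since $\int_0^1(1-t)\,dt=\tfrac12$ as well, the identical constant $\tfrac{L_g}{2}$ is recovered. The only point requiring care---and the closest thing to an obstacle in an otherwise textbook argument---is measuring the Lipschitz gap against the correct endpoint so that the weight appearing inside the integral ($t$ versus $1-t$) still integrates to the sharp factor $\tfrac12$; apart from this bookkeeping, both cases reduce to a routine apply-Cauchy--Schwarz-and-integrate estimate, and the differentiability of $g$ guarantees the integral representation above is valid.
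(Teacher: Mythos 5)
Your argument is correct and is essentially identical to the paper's own proof: both use the integral representation $g(y_2)-g(y_1)=\int_0^1\langle\nabla g(y_1+t(y_2-y_1)),y_2-y_1\rangle\,dt$, subtract the $\nabla g(s)$ term, and apply Cauchy--Schwarz with the Lipschitz bound (weight $t$ for $s=y_1$, weight $1-t$ for $s=y_2$, each integrating to $\tfrac12$). Your treatment of the $s=y_2$ case is in fact slightly more explicit than the paper's, which simply asserts it follows ``similarly.''
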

\begin{proof}

\begin{align}
& g(y_2)-g(y_1)
= \int_0^1 \nabla g(ty_2+(1-t) y_1) \cdot(y_2-y_1) \mathrm{d} t \\
=& \int_0^1 \nabla g(s) \cdot(y_2-y_1) \mathrm{d} t+\int_0^1(\nabla g(y_2+(1-t) y_1)-\nabla g(s))\cdot(y_2-y_1) \mathrm{d} t,
\end{align}
where $\nabla g(\cdot)$ defines the gradient of $g(\cdot)$. If we take $s=y_1$, then by inequality
\begin{equation}
\|\nabla g(t y_2+(1-t) y_1)-\nabla g(y_1)\| \leq L_g\|t(y_2-y_1)\|
\end{equation}
we have
\begin{align}
& \int_0^1 \nabla g(y_1) \cdot(y_2-y_1) \mathrm{d} t+\int_0^1(\nabla g(t y_2+(1-t) y_1)-\nabla g(y_1)) \cdot(y_2-y_1) \mathrm{d} t \\
\geq & \nabla g(y_1) \cdot(y_2-y_1)-\int_0^1 L_g t\|y_2-y_1\|^2 \mathrm{~d} t \\
=& \nabla g(y_1) \cdot(y_2-y_1)-\frac{L_g}{2}\|y_2-y_1\|^2 .
\end{align}
Therefore, we get
\begin{equation}
g(y_2)-g(y_1) \geq \nabla g(y_1) \cdot(y_2-y_1)-\frac{L_g}{2}\|y_2-y_1\|^2 .
\end{equation}
Similarly, if we take $s=y_2$, we can get
\begin{equation}
g(y_2)-g(y_1) \geq \nabla g(y_2) \cdot(y_2-y_1)-\frac{L_g}{2}\|y_2-y_1\|^2 .
\end{equation}
\end{proof}
\begin{lemma}\label{lemma:mk-converges}
    Under Assumption~\ref{assump:1}, if we choose the hyper-parameters $\beta$ and $L_x$ satisfying (\ref{eq:convergence-condition1}) and \eqref{eq:convergence-condition2}, then the sequence $\{m_k\}$ defined by
    \begin{equation}
        m_k = \mathcal{L}_\beta(x_k,z_{k},w_{k})
    \end{equation}
    is convergent.
\end{lemma}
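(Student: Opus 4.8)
The plan is to show that $\{m_k\}$ is non-increasing and bounded below, so that it converges by the monotone convergence theorem. For the monotonicity, I would decompose the one-step change of the augmented Lagrangian by inserting the intermediate iterates $(x_{k+1},z_k,w_k)$ and $(x_{k+1},z_{k+1},w_k)$, giving
\begin{align}
m_k - m_{k+1} ={}& \big[\mathcal{L}_\beta(x_k,z_k,w_k) - \mathcal{L}_\beta(x_{k+1},z_k,w_k)\big] \nonumber \\
&+ \big[\mathcal{L}_\beta(x_{k+1},z_k,w_k) - \mathcal{L}_\beta(x_{k+1},z_{k+1},w_k)\big] \nonumber \\
&- \big[\mathcal{L}_\beta(x_{k+1},z_{k+1},w_{k+1}) - \mathcal{L}_\beta(x_{k+1},z_{k+1},w_k)\big].
\end{align}
The first bracket is bounded below using Lemma~\ref{lemma:lagr_x}, the second using Lemma~\ref{lemma:lagr_z}, and the third (which subtracts the contribution of the $w$-update) using Lemma~\ref{lemma:lagr_u}. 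Assembling these three estimates yields a descent inequality of the form
\begin{equation}
m_k - m_{k+1} \geq \frac{L_x - \beta\LH}{2}\|x_k - x_{k+1}\|^2 + (m - C_1)\|z_k - z_{k+1}\|^2,
\end{equation}
where $m$ is the strong-convexity modulus of $\mathcal{L}_\beta$ in $z$ and $C_1 = L_h^2/\beta$.

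I expect the main obstacle to be verifying that both coefficients on the right are non-negative, since the $w$-update \emph{increases} the Lagrangian and this increase must be absorbed by the decrease coming from the $z$-update. The coefficient of $\|x_k - x_{k+1}\|^2$ is non-negative exactly by condition \eqref{eq:convergence-condition2}. For the $z$-term, I would observe that because $h$ is strongly convex, the $z$-dependent part of $\mathcal{L}_\beta$ has Hessian $\nabla^2 h(z) + \beta I$, so its modulus satisfies $m \geq \beta$; then condition \eqref{eq:convergence-condition1} gives $\beta \geq L_h$, whence $m \geq \beta \geq L_h^2/\beta = C_1$. This is the crux of the argument, and it is precisely where the hypothesis $\beta \geq L_h$ is used. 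It follows that $m_k - m_{k+1} \geq 0$, i.e. $\{m_k\}$ is non-increasing.

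For the lower bound I would use Lemma~\ref{lemma:w_grad_h} to replace $w_k$ by $\nabla h(z_k)$ inside $\mathcal{L}_\beta(x_k,z_k,w_k)$, and then apply Lemma~\ref{lemma:lipsch} to $h$ with the choice $s = y_2 = z_k$, $y_1 = Hx_k$, which gives
\begin{equation}
h(z_k) + \langle \nabla h(z_k), Hx_k - z_k\rangle \geq h(Hx_k) - \frac{L_h}{2}\|Hx_k - z_k\|^2.
\end{equation}
Substituting this into the definition of the Lagrangian and collecting the quadratic terms yields
\begin{equation}
m_k \geq h(Hx_k) + f(x_k) + \frac{\beta - L_h}{2}\|Hx_k - z_k\|^2 \geq h(Hx_k) + f(x_k),
\end{equation}
where the last inequality again uses $\beta \geq L_h$. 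Since $(z,x) = (Hx_k,x_k)$ lies on the constraint set $\{z = Hx\}$, the right-hand side is bounded below by the first part of Assumption~\ref{assump:1}. Thus $\{m_k\}$ is bounded below; being also non-increasing, it converges, which is the desired conclusion.
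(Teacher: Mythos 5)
Your proof follows essentially the same route as the paper's: the same three-way decomposition of $m_k-m_{k+1}$ controlled by Lemmas~\ref{lemma:lagr_x}, \ref{lemma:lagr_z} and \ref{lemma:lagr_u}, and the same lower bound obtained by substituting $w_k=\nabla h(z_k)$ (Lemma~\ref{lemma:w_grad_h}), applying Lemma~\ref{lemma:lipsch} with $\beta\ge L_h$, and invoking the first bullet of Assumption~\ref{assump:1} at the feasible point $(Hx_k,x_k)$. The one place you diverge is actually a correction: since the dual update \emph{increases} the Lagrangian, the coefficient of $\|z_k-z_{k+1}\|^2$ in the descent inequality must be $(m - L_h^2/\beta)$ as you write, whereas the paper's display carries $(m + L_h^2/\beta)$ — a sign slip that lets it skip the justification of non-negativity which you rightly identify as the crux and supply via $L_h^2/\beta \le L_h \le \beta \le m$. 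One caveat on that step: whether the modulus in Lemma~\ref{lemma:lagr_z} satisfies $m\ge\beta$ or only $m\ge\beta/2$ depends on the normalization (the Hessian bound $\nabla^2 h+\beta I\succeq\beta I$ gives $\tfrac{\beta}{2}\|\cdot\|^2$ in the standard strong-convexity inequality), and with the factor $\tfrac12$ the hypothesis $\beta\ge L_h$ alone yields $\tfrac{\beta}{2}\ge\tfrac{L_h^2}{2\beta}$ rather than the needed $\tfrac{\beta}{2}\ge\tfrac{L_h^2}{\beta}$; closing this would require $\beta\ge\sqrt{2}\,L_h$ or appealing to the strong convexity of $h$ itself. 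That imprecision is inherited from the paper's Lemma~\ref{lemma:lagr_z}, not introduced by you, and your write-up is otherwise sound.
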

\begin{proof}
    1) \textbf{Monotonicity:} By using Lemma~\ref{lemma:lagr_x}, Lemma~\ref{lemma:lagr_z} and Lemma~\ref{lemma:lagr_u} we have:
    \begin{align}
        & m_{k} - m_{k+1} = \mathcal{L}_\beta(x_{k}, z_{k}, w_{k}) - \mathcal{L}_\beta(x_{k+1}, z_{k+1}, w_{k+1}) \\
        & \ \ \ \ \geq \mathcal{L}_\beta(x_{k+1}, z_{k}, w_{k}) - \mathcal{L}_\beta(x_{k+1}, z_{k+1}, w_{k+1}) \\
        & \ \ \ \ \ \ \ +  \frac{L_x-\beta \|H\|^2}{2} \|x_k - x_{k+1}\|^2 \\
        & \ \ \ \ \geq \mathcal{L}_\beta(x_{k+1}, z_{k+1}, w_{k}) - \mathcal{L}_\beta(x_{k+1}, z_{k+1}, w_{k+1}) \\
        & \ \ \ \ \ \ \ +  \frac{L_x-\beta\LH}{2} \|x_k - x_{k+1}\|^2  + m \|z_{k}-z_{k+1}\|^2 \\
        & \label{equ:monot_mk_1}
        \ \ \ \ \geq  \frac{L_x-\beta\LH}{2} \|x_k - x_{k+1}\|^2 + (m + \frac{L_h^2}{\beta})\|z_{k}-z_{k+1}\|^2
    \end{align}
    Since we chose $L_x$ such that:
    \begin{align}
        & L_x \geq  \beta\LH \\
        \Leftrightarrow \quad  & \frac{L_x-\beta\LH}{2} > 0
    \end{align}
    we obtain the monotonocity of $\{m_k\}$.
    \newline
    \newline
    2) \textbf{Lower bound}:
    \begin{align}
        m_k & =  h(z_k) + f(x_k) + \langle w_k, H x_k-z_k\rangle + \frac{\beta}{2}\|Hx_k-z_k\|^2
    \end{align}
    Let $z_{k}' = H x_k$, from Lemma~\ref{lemma:w_grad_h} we have:
    \begin{align}
        \langle w_k, H x_k-z_k\rangle 
        & = \langle w_k, z_{k}' - z_k\rangle \\
        & = \langle \nabla h(z_k) , z_{k}' - z_k\rangle
    \end{align}
    so we can rewrite:
    \begin{align}
        m_k & =  h(z_k) + f(x_k) + \langle \nabla h(z_k) , z_k' - z_k\rangle + \frac{\beta}{2}\|z_k'-z_k\|^2 \\
    \end{align}
    We chose $\beta$ such that $\beta \geq L_h$ so:
    \begin{align}
        m_k & \geq  h(z_k) + f(x_k) - \langle \nabla h(z_k) , z_k - z_k'\rangle + \frac{L_h}{2}\|z_k - z_k'\|^2 \\
        & \geq h(z_k') + f(x_k) \ \ \text{from Lemma~\ref{lemma:lipsch}.}
    \end{align}
    Following Assumption~\ref{assump:1}, $h(z_k') + f(x_k)$ is lower bounded so ${m_k}$ is lower bounded. ${m_k}$ is monotonically decreasing and lower bounded which ensures the convergence. 
\end{proof}
\begin{lemma}
\label{lemma_f1_f2}
    Suppose we have a differentiable function $f_1$, a possibly non differentiable function $f_2$, and a point x. If there exist $d_2 \in \partial f_2(x)$, then we have:
    $$ d=d_2 - \nabla f_1(x) \in \partial(f_2(x) - f_1(x)) $$
\end{lemma}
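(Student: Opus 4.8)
The plan is to establish this as the ``easy'' inclusion of the subdifferential sum rule, using the differentiability of $f_1$ to absorb it into the subgradient of $f_2$. First I would fix the meaning of $\partial$: since the surrounding analysis follows the non-convex framework of~\cite{Liu2019}, $\partial$ denotes the (limiting) subdifferential, built from the Fréchet subdifferential $\hat{\partial}$, whose defining property is that $d_2 \in \hat{\partial} f_2(x)$ if and only if
\begin{equation}
\liminf_{y \to x} \frac{f_2(y) - f_2(x) - \langle d_2, y - x\rangle}{\|y-x\|} \geq 0 .
\end{equation}
Writing $g = f_2 - f_1$, the goal is to verify that $d = d_2 - \nabla f_1(x)$ obeys the same inequality for $g$ at $x$.

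The key step is the algebraic decomposition
\begin{align}
& g(y) - g(x) - \langle d, y - x\rangle \nonumber \\
& = \bigl(f_2(y) - f_2(x) - \langle d_2, y-x\rangle\bigr) - \bigl(f_1(y) - f_1(x) - \langle \nabla f_1(x), y-x\rangle\bigr) .
\end{align}
After dividing by $\|y-x\|$, the first parenthesis has $\liminf \geq 0$ because $d_2 \in \hat{\partial} f_2(x)$, whereas the second parenthesis equals $o(\|y-x\|)$ by Fréchet differentiability of $f_1$ at $x$, so its normalized form tends to $0$. Since adding a term that tends to zero leaves a $\liminf$ unchanged, I conclude
\begin{equation}
\liminf_{y \to x} \frac{g(y) - g(x) - \langle d, y-x\rangle}{\|y-x\|} \geq 0 ,
\end{equation}
which is precisely $d \in \hat{\partial} g(x)$.

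The computation above is routine; the only delicate point — and the main obstacle — is the passage from the Fréchet subdifferential to the limiting one. If $d_2 \in \partial f_2(x)$ is only a limiting subgradient, I would take sequences $x_j \to x$ with $f_2(x_j) \to f_2(x)$ and $d_2^{(j)} \in \hat{\partial} f_2(x_j)$, $d_2^{(j)} \to d_2$; applying the Fréchet-level inclusion at each $x_j$ gives $d_2^{(j)} - \nabla f_1(x_j) \in \hat{\partial} g(x_j)$, and continuity of $f_1$ and $\nabla f_1$ then yields $g(x_j) \to g(x)$ and $d_2^{(j)} - \nabla f_1(x_j) \to d$, so that $d \in \partial g(x)$ by definition of the limiting subdifferential. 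This is where I would need $f_1$ to be continuously differentiable rather than merely differentiable at $x$; under that mild strengthening the inclusion follows cleanly, and equivalently one may invoke the exact sum rule $\partial(f_2 - f_1)(x) = \partial f_2(x) - \nabla f_1(x)$ valid for a $C^1$ summand.
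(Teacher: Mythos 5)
Your proof is correct and takes essentially the same route as the paper's: both decompose $g(y)-g(x)-\langle d, y-x\rangle$ into the Fréchet subgradient inequality for $f_2$ plus the first-order Taylor remainder of $f_1$, which is $o(\|y-x\|)$. Your closing paragraph on upgrading from the Fréchet to the limiting subdifferential (via approximating sequences, under the mild strengthening that $f_1$ is $C^1$) addresses a point the paper's proof leaves implicit, but the core argument is identical.
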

\begin{proof}
    From the subgradient definition we have that:
    \begin{equation}
        f_2(y) \geq f_2(x) + \langle d_2, y-x \rangle + o(\|y-x\|)
    \end{equation}
    From the fact that $f_1$ is differentiable we have that:
    \begin{equation}
        -f_1(y) = -f_1(x) - \langle\nabla f_1(x), y-x\rangle + o(\|y-x\|)
    \end{equation}
    Combining the two leads to:
    \begin{equation}
        f_2(y) -f_1(y) \geq f_2(x) - f_1(x) + \langle d_2 - \nabla f_1(x), y-x\rangle + o(\|y-x\|)
    \end{equation}
\end{proof}
\noindent
\begin{proof}[Proof of Theorem 1:] We divide the proof in three parts:

\textbf{a) Convergence of the residuals:} 
\newline
From Lemma~\ref{lemma:mk-converges} and its proof we have that:
    \begin{equation}
        m_{k+1} - m_k \geq a \|x_k - x_{k+1}\|^2 + \left(m+\frac{L_h^2}{\beta}\right) \|z_{k-1}-z_{k}\|^2 \geq 0
    \end{equation}
    with $(m + \frac{L_h^2}{\beta}) > 0$, $a = \frac{L_x-\beta \LH}{2} >0$ (according to Assumption 1)
    and that $m_k$ converges. This implies that $\|x_k - x_{k+1}\|^2$ and $\|y_k - y_{k+1}\|^2$ converge to 0 as $k$ approaches infinity. Lemma~\ref{lemma:lagr_u} ensure the convergence of $\|w_k - w_{k+1}\|^2$ to 0. The convergence of $m_k$ directly implies the convergence of $\mathcal{L}_\beta(x_k,z_{k},w_{k})$.

\textbf{b) Convergence of the gradients:}
\newline
For the convergence of $\lim_{k\to\infty} \nabla_w \mathcal{L}_\beta(x_k,z_k,w_k)$, we have that:
    \begin{equation}
        \lim_{k\to\infty} \nabla_w \mathcal{L}_\beta(x_k,z_k,w_k) =  \lim_{k\to\infty} Hx_k - z_k = \lim_{k\to\infty}\frac{1}{\beta}(w_{k+1}-w_k) =0.
    \end{equation}
On the other side, we have using Lemma~\ref{lemma:w_grad_h} that: 
\begin{align}
     \nabla_z \mathcal{L}_\beta(x_k,z_k,w_k) & = \nabla h(z_k) - w_k - \beta (Hx_k - z_k) \\
     & = w_k - w_k - (w_{k+1}-w_k) = -(w_{k+1}-w_k) \rightarrow 0
\end{align}
Finally, we want to show that there exists
\begin{equation}
    d^k \in \partial_x \mathcal{L}_\beta(x_k,z_k,w_k) \quad \text{s.t} \quad \lim_{k\to\infty} d^k = 0.
\end{equation}
Since $x^{k+1}$ is the minimum point of  $\Tilde{\mathcal{L}}^k_{\beta}(x,z_k,w_k)$, we have that $0 \in \partial \Tilde{\mathcal{L}}^k_{\beta}(x,z_k,w_k)$. Using Lemma~\ref{lemma_f1_f2} and the definition of $\Tilde{\mathcal{L}}^k_{\beta}$ we have:
\begin{align}
    &\exists d_{k+1} \in \partial f(x_{k+1}) \\
    \label{equ:dk}
    s.t \quad & H^T w_k + L_x(x_{k+1}-x_k) + \beta H^T (Hx_{k} -z_{k}) + d_{k+1} = 0
\end{align}
Lets us define:
\begin{equation}
    \Tilde{d}_{k+1} = H^T w_{k+1} + \beta H^T (Hx_{k+1} -z_{k+1}) + d_{k+1}
\end{equation}
we can easily verify that $\Tilde{d}_{k+1} \in \partial_x \mathcal{L}_\beta(x_{k+1}, z_{k+1}, w_{k+1})$.\newline
We already showed that the primal residues $\|x_{k+1}-x_k\|$, $\|z_{k+1}-z_k\|$, $\|w_{k+1}-w_k\|$ converge to 0 as k approaches infinity, therefore:
\begin{align}
    \lim_{k\to\infty}  \Tilde{d}_{k+1} &  =  \lim_{k\to\infty} H^T w_{k+1} + \beta H^T (Hx_{k+1} -z_{k+1}) + d_{k+1}\\
    & = \lim_{k\to\infty} H^T w_k + L_x(x_{k+1}-x_k) + \beta H^T (Hx_{k} -z_{k}) + d_{k+1} = 0
\end{align}
where the last equality is obtained using~\ref{equ:dk}.

\textbf{c) Convergence to a critical point of $h(H\cdot)+\lambda f(\cdot)$:} \newline
Since we are optimizing
\begin{equation}
E(x) = h(Hx) + f(x)
\end{equation}
we would like to show that
\begin{equation}
\lim_{k\to\infty} \nabla E(x_k) = 0
\end{equation}
Indeed, from the chain rule
\begin{equation}
\nabla E(x_k) = H^* \nabla h(Hx) + \nabla f(x).
\end{equation}
From part b of this proof we have that
\begin{align}
\lim_{k\to\infty} \nabla_w \L_\beta(x_k,z_k,w_k) 
& = \lim_{k\to\infty} z_k - H x_k 
& = 0 \label{eq:dLdw}\\
\lim_{k\to\infty} \nabla_z \L_\beta(x_k,z_k,w_k) 
& = \lim_{k\to\infty} w_k + \beta(z_k - Hx_k) + \nabla h(z_k)
& = 0 \label{eq:dLdz}\\
\lim_{k\to\infty} \nabla_x \L_\beta(x_k,z_k,w_k)
& = \lim_{k\to\infty} \nabla f(x_k) - H^* w_k + \beta H^*(Hx_k - z_k)
& = 0 \label{eq:dLdx}
\end{align}
where in the last equation we used the additional hypothesis that $f$ is differentiable.
Using equation \eqref{eq:dLdw} in equations~\eqref{eq:dLdz}~and~\eqref{eq:dLdx}:
\begin{align}
    \lim_{k\to\infty} z_k - H x_k & = 0 \label{eq:Hxk}\\
    \lim_{k\to\infty} \nabla h(z_k) + w_k & = 0 \\
    \lim_{k\to\infty} \nabla f(x_k) - H^* w_k & = 0 \label{eq:expr1}
\end{align}
Since $\nabla g$ is continuous, using equation \eqref{eq:Hxk} we get that
\begin{equation}\label{eq:expr2}
\lim_{k\to\infty} \nabla h(H x_k) + w_k = \lim_{k\to\infty} \nabla h(z_k) + w_k = 0 
\end{equation}
Rearranging the terms
\begin{align*}
    \nabla E(x_k) & = H^* \nabla h(H x_k) + \nabla f(x_k) \\
    & = H^* \nabla h(H x_k) + H^* w_k + \nabla f(x_k) - H^* w_k \\
    & = H^* (\nabla h(H x_k) + w_k) + (\nabla f(x_k) - H^* w_k)
\end{align*}
Finally using equations \eqref{eq:expr1} and \eqref{eq:expr2} in the previous result we obtain
$$ \lim_{k\to\infty} \nabla E(x_k) = H^*0 + 0 = 0$$
This shows that with the additional hypothesis of $f$ being differentiable, the Linearized ADMM converges to a critical point of the original objective $E(x)$.
\end{proof}

\section{Application to PnP-LADMM}

\subsection{Proof of Proposition~\ref{prop:pnp-denoisers}}

\subsubsection{Proximal Gradient Step Denoiser.} 
\begin{proof}
    Let $\denoiser_\sigmad$ be the proximal gradient step denoiser defined in \cite{Hurault2022} as $\denoiser_\sigmad := Id - \nabla g_\sigmad$ where 
$ g_\sigmad (x) = \frac{1}{2} \|x - N_\sigmad \|^2 $
and $N_\sigmad$ is a neural network.

According to \cite[Proposition 3.1]{Hurault2022} there exists $\phi_\sigmad$ such that $\denoiser_\sigmad = \prox_{\phi_\sigmad}$. 

In addition \cite[Equation (26)]{Hurault2022} states that $\phi_\sigmad \geq g_\sigmad$, and by definition $g_\sigmad \geq 0$.

Hence, $f=\phi_\sigmad / \sigma_d^2$ is lower bounded by 0 and $\denoiser_\sigmad = \prox_{\sigma_d^2 f}$ as indicated by Proposition~\ref{prop:pnp-denoisers}.
\end{proof}

\subsubsection{MMSE Denoiser.}
\begin{proof}
    Let $\denoiser_{\sigma_d}(y) = E[X|Y=y]$ be an MMSE denoiser, where $Y=X+\sigma_d N$ and $N\sim \mathcal{N}(0,\sigma_d^2 Id)$, and $X \sim p_X$, $p_X$ being a probability measure.
    
    We want to show that there exists a lower bounded $\phi_\sigmad$ such that $\denoiser_{\sigma_d}(x)=\operatorname{prox}_{\phi_\sigmad}(x)$.

    For $\sigma_d=1$ according to \cite{Gribonval2011} there exists $f(x) \geq - \log p_Y(x)$, such that $\denoiser_{1} = \operatorname{prox}_{f}$.
    $f$ is lower bounded because the noisy density $p_Y(x) = (p_X * g_1) (x) \leq 1/\sqrt{2\pi}$ is upper-bounded by the maximum value of $g_1$ (the gaussian pdf with identity covariance matrix).\\

    For $\sigma_d\neq 1$ the problem can be reduced to the previous case via the following scaling: Consider
    $\mathcal{P}(x) = \frac{1}{\sigma_d} \denoiser_{\sigma_d}(\sigma_d x)$.
    Then $\mathcal{P}(y) = E[\tilde{X}|\tilde{Y}=y]$ is an MMSE denoiser with variance 1 with $\tilde{X} = X/\sigma_d$ and $\tilde{Y}=\tilde{X} + N$.
    So we can find (according to the previous argument for $\sigma_d=1$) $f$ such that  $\mathcal{P} = \operatorname{prox}_{f}$.
    Applying a change of variables in the proximal operator we obtain
    $$ \denoiser_{\sigma_d} (y) = \sigma_d \mathcal{P}(y/\sigma_d) = \operatorname{prox}_{\phi_\sigmad}(y)$$
    where
    $$ \phi_\sigmad (x) = \sigma_d^2 f(x/\sigma_d) $$
    Finally, since $f$ is lower-bounded $\phi_\sigmad$ is lower bounded too.

\section{Additional experiments}

\subsection{Parameters influence}

PnP-linearized has 4 hyper-parameters that are $\sigma_d$, the noise removed by the denoiser, $\lambda$ the weight of the regularization, $\mu$ the ADMM parameter and $L_x$ the weight of the extra regularization introduced by linearized-ADMM. Those parameters are linked with the following equality:
\begin{equation}
    \sigma_d^2 = \frac{\lambda}{L_x}
\end{equation}
and through the condition from Equation~(\ref{eq:convergence-condition1}) and (\ref{eq:convergence-condition2}) of Theorem~\ref{theorm1}. Those equations can be summarized as:
\begin{align}
    & \frac{L_x}{\|H\|^2} \geq \beta \geq L_h = \frac{1}{\sigma^2} \\
    & \frac{\lambda}{\sigma_d^2 \|H\|^2} \geq \beta \geq \frac{1}{\sigma^2} 
\end{align}

In practice, it means that for a fixed denoiser, there are only two degrees of freedom and that increasing $\lambda$ will lead to an increase on $L_x$. In Figure~\ref{fig:params_influence} we can observe the role of the different hyper-parameters for a fixed denoiser. We can observe that $\beta$ controls the convergence while $L_x$ changes the point of convergence. This is an expected behavior since changing $L_x$ will imply a change in $\lambda$ so a change in the objective function we are minimizing. We can also notice on this figure that when Equation~(\ref{eq:convergence-condition2}) is not fulfilled, which means when the approximate Lagrangian from Equation~\ref{equ:approx_lagr} is not a majorizer of the augmented Lagrangian, the algorithm diverges. We on the other side observe that the algorithm converge even when Equation~(\ref{eq:convergence-condition2}) is not respected.
\begin{figure}[H]
    \centering
    \includegraphics[width=0.4\linewidth]{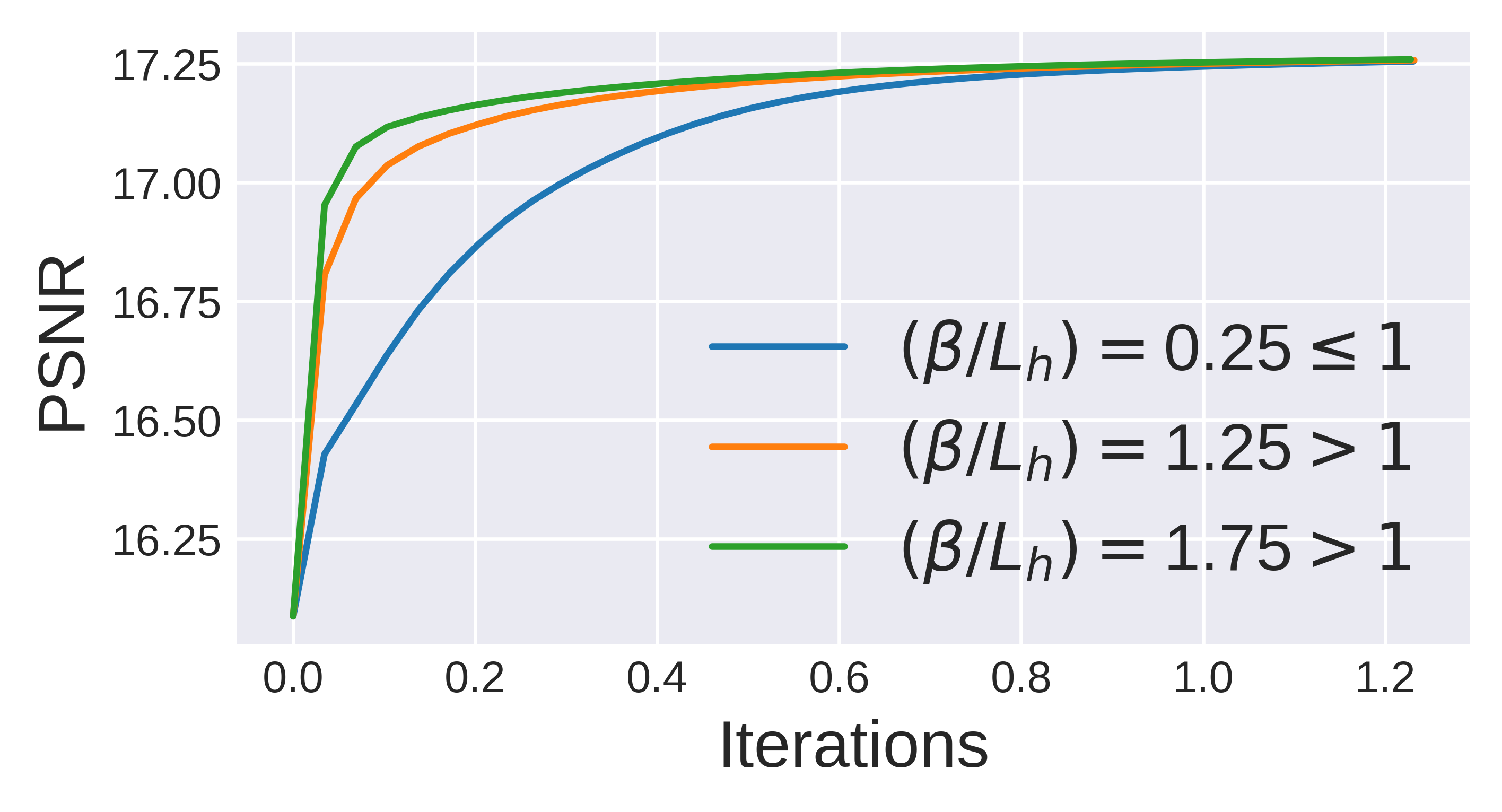} \hspace{1cm} \includegraphics[width=0.4\linewidth]{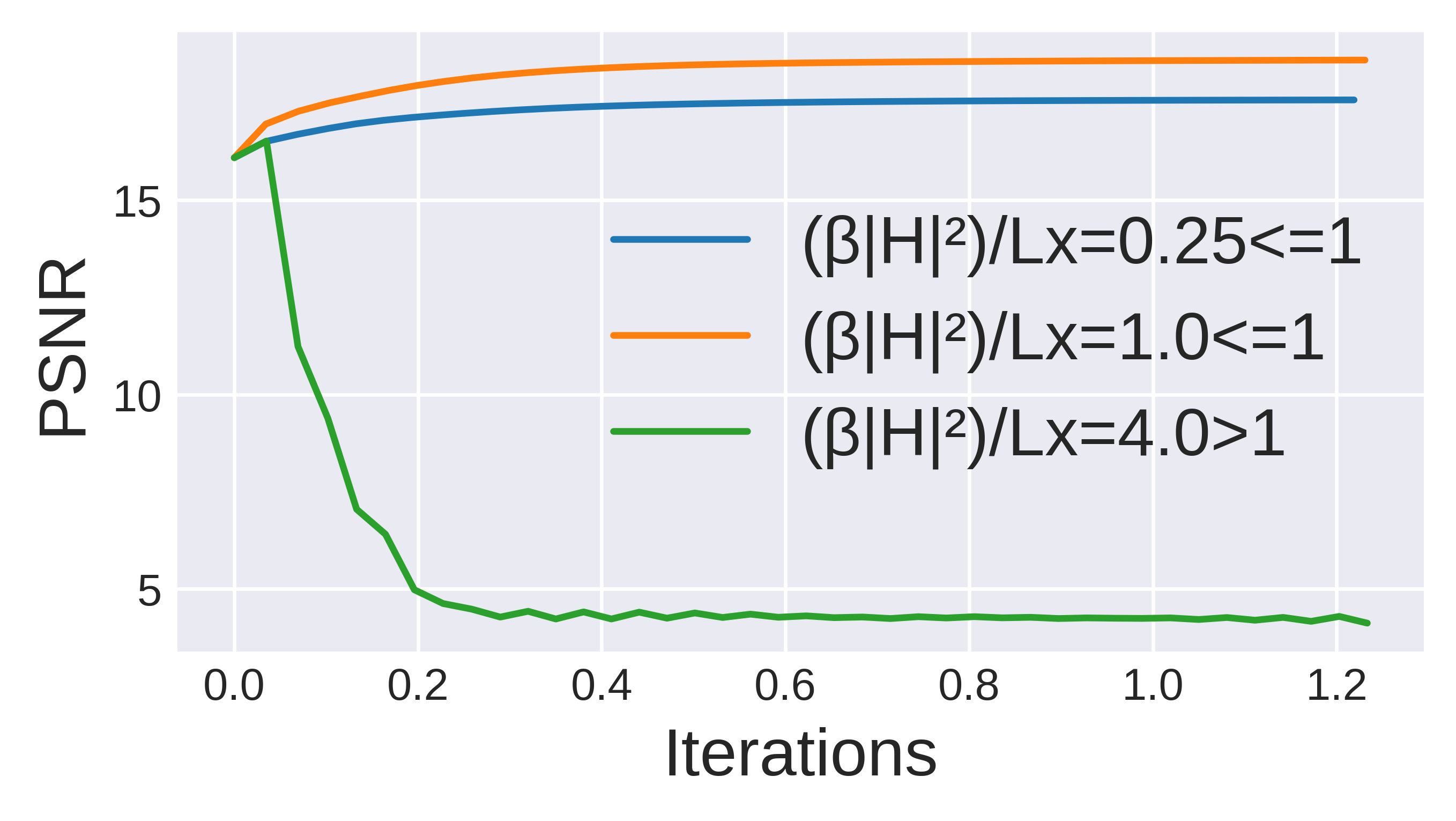}
    \caption{Parameters influence}
    \label{fig:params_influence}
\end{figure}

\subsection{Visual results}

We did not provide visual results in the main paper since our algorithm reaches similar performance as PnP-ADMM+CG. Thus there is not much to compare between the images. However, Figure~\ref{fig:visual_res} shows some visual results we obtained for the task of deblurring spatially-varying blur.
\begin{figure}[H]
    \centering
    \includegraphics[width=0.7\linewidth]{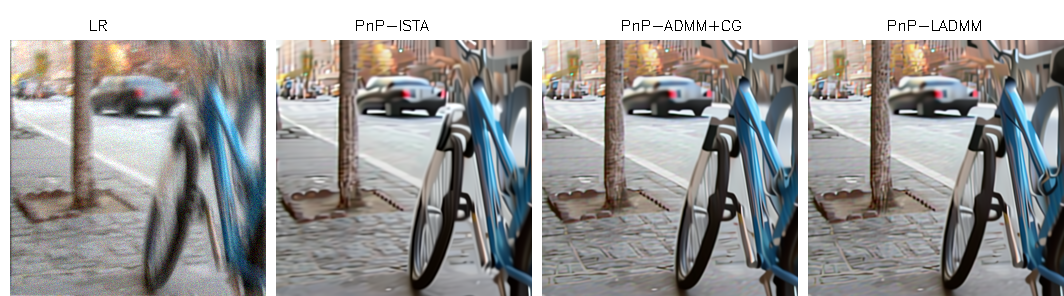} \includegraphics[width=0.7\linewidth]{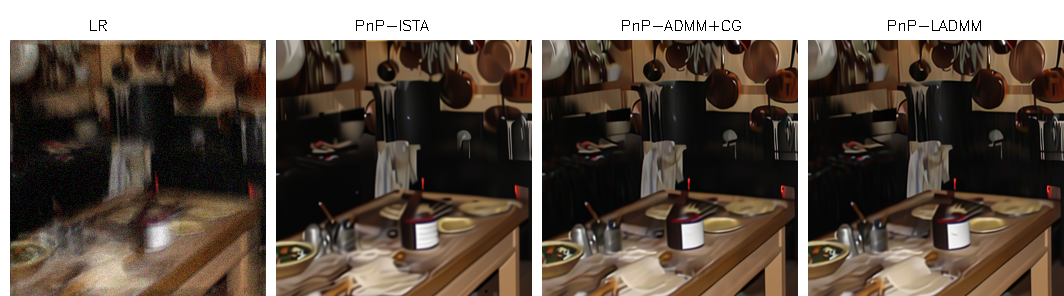}
    \caption{Visual results for deblurring spatially-varying blur.}
    \label{fig:visual_res}
\end{figure}

\subsection{Convergence of the splitting variables}

Theorem~\ref{theorm1} ensures the convergence of the splitting variables which means $\|x_{k+1}-x_k\|$ and $\|z_{k+1}-z_k\|$ converge. We provide in Figure~\ref{fig:conv-compare} a plot of $\|x_{k+1}-x_k\|$. We can see that PnP-ISTA has a fast convergence in the first iterations but then the convergence is very slow. On the other side, PnP-ADMM + CG convergence is slow but better than PnP-ISTA after few iterations. PnP-linearized-ADMM is the fastest algorithm. The advantage of plotting PSNR instead of $\|x_{k+1}-x_k\|$ is that it both shows the convergence of the algorithm and the quality of the fixed point.

\begin{figure}[H]
    \centering
    \includegraphics[width=0.5\linewidth]{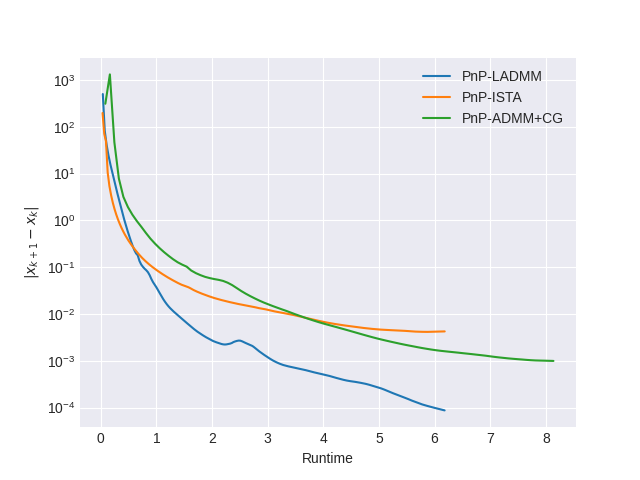}
    \caption{Convergence of the splitting variable $\|x_{k+1}-x_k\|$.}
    \label{fig:conv-compare}
\end{figure}

\end{proof}
}{}
\end{document}